\newcommand{\rohin}[1]{\ifthenelse{\boolean{shownotes}}{{\color{blue} RS: #1}}{}
}
\newcommand{\todo}[2][]{\textcolor{red}{\textbf{TODO\ifthenelse{ \equal{#1}{} }{}{(#1)}:} #2}}
\newcommand{\TODO}[2][]{\textcolor{red}{\textbf{TODO\ifthenelse{ \equal{#1}{} }{}{(#1)}:} #2}}
\newcommand{\norm}[1]{\left\lVert #1 \right\rVert}
\newtheorem{theorem}{Theorem}
\newtheorem{lemma}{Lemma}
\tiny\color{gray},
\title{Jumping Ahead: Improving Reconstruction Fidelity with JumpReLU Sparse Autoencoders}
\author[*]{Senthooran Rajamanoharan}
\author[ \hspace{-0.7ex}]{Tom Lieberum$^\dagger$}
\author[ \hspace{-0.7ex}]{Nicolas Sonnerat}
\author[ \hspace{-0.7ex}]{Arthur Conmy}
\author[ \hspace{-0.7ex}]{Vikrant Varma}
\author[ \hspace{-0.7ex}]{J\'anos Kram\'ar}
\author[ \hspace{-0.7ex}]{Neel Nanda}
\affil[*]{: Core contributor. $^\dagger$: Core infrastructure contributor.}
\begin{abstract}
Sparse autoencoders (SAEs) are a promising unsupervised approach for identifying causally relevant and interpretable linear features in a language model's (LM) activations.
To be useful for downstream tasks, SAEs need to decompose LM activations faithfully; yet to be interpretable the decomposition must be sparse -- two objectives that are in tension.
In this paper, we introduce JumpReLU SAEs, which achieve state-of-the-art reconstruction fidelity at a given sparsity level on Gemma 2 9B activations, compared to other recent advances such as Gated and TopK SAEs.
We also show that this improvement does not come at the cost of interpretability through manual and automated interpretability studies.
JumpReLU SAEs are a simple modification of vanilla (ReLU) SAEs -- where we replace the ReLU with a discontinuous JumpReLU activation function -- and are similarly efficient to train and run.
By utilising straight-through-estimators (STEs) in a principled manner, we show how it is possible to train JumpReLU SAEs effectively despite the discontinuous JumpReLU function introduced in the SAE's forward pass. Similarly, we use STEs to directly train L0 to be sparse, instead of training on proxies such as L1, avoiding problems like shrinkage.

\end{abstract}
\begin{document}
\maketitle
\section{Introduction}
\label{sec:intro}

Sparse autoencoders (SAEs) allow us to find causally relevant and seemingly interpretable directions in the activation space of a language model \citep{bricken2023monosemanticity,cunningham2023sparse,templeton2024scaling}.
There is interest within the field of mechanistic interpretability in using sparse decompositions produced by SAEs for tasks such as circuit analysis \citep{marks2024sparse} and model steering \citep{conmy2024activation}.

SAEs work by finding approximate, sparse, linear decompositions of language model (LM) activations in terms of a large dictionary of basic ``feature'' directions. Two key objectives for a good decomposition \citep{bricken2023monosemanticity} are that it is sparse -- i.e.~that only a few elements of the dictionary are needed to reconstruct any given activation -- and that it is faithful -- i.e.~the approximation error between the original activation and recombining its SAE decomposition is ``small'' in some suitable sense. These two objectives are naturally in tension: for any given SAE training method and fixed dictionary size, it is typically not possible to increase sparsity without losing reconstruction fidelity.

One strand of recent research in training SAEs on LM activations \citep{rajamanoharan2024improving, gao2024scalingevaluatingsparseautoencoders, taggart} has been on finding improved SAE architectures and training methods that push out the Pareto frontier balancing these two objectives, while preserving other less quantifiable measures of SAE quality such as the interpretability or functional relevance of dictionary directions. A common thread connecting these recent improvements is the introduction of a thresholding or gating operation to determine which SAE features to use in the decomposition.

\begin{figure*}[t]
\centering
\includegraphics[width=\textwidth]{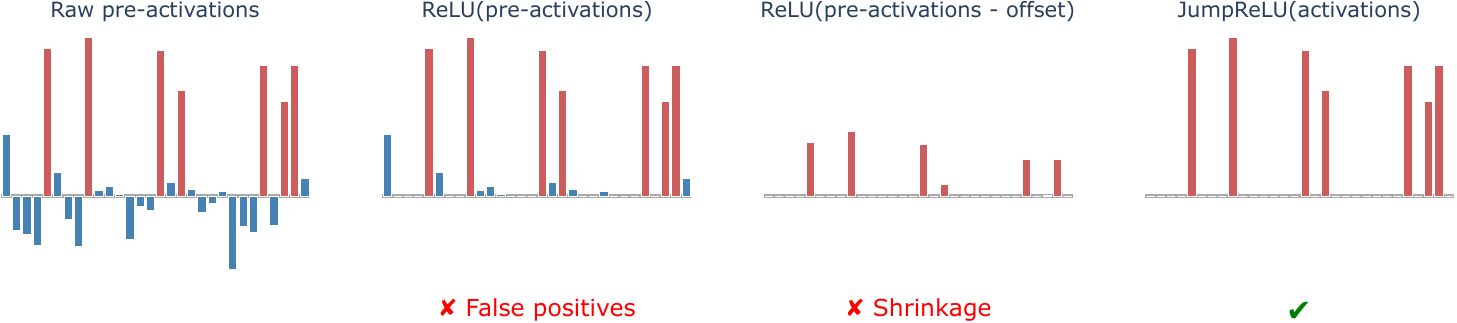}
\caption{A toy model illustrating why JumpReLU (or similar activation functions, such as TopK) are an improvement over ReLU for training sparse yet faithful SAEs. Consider a direction in which the encoder pre-activation is high when the corresponding feature is active and low, but not always negative, when the feature is inactive (far-left). Applying a ReLU activation function fails to remove all false positives (centre-left), harming sparsity. It is possible to get rid of false positives while maintaining the ReLU, e.g.~by decreasing the encoder bias (centre-right), but this leads to feature magnitudes being systematically underestimated, harming fidelity. The JumpReLU activation function (far-right) provides an independent threshold below which pre-activations are screened out, minimising false positives, while leaving pre-activations above the threshold unaffected, improving fidelity.}
\label{fig:why-jumprelu}
\end{figure*}

In this paper, we introduce \textbf{JumpReLU SAEs} -- a small modification of the original, ReLU-based SAE architecture \citep{ng2011sparse} where the SAE encoder's ReLU activation function is replaced by a JumpReLU activation function \citep{erichson2019jumprelu}, which zeroes out pre-activations below a positive threshold (see \cref{fig:why-jumprelu}). Moreover, we train JumpReLU SAEs using a loss function that is simply the weighted sum of a L2 reconstruction error term and a L0 sparsity penalty, eschewing easier-to-train proxies to L0, such as L1, and avoiding the need for auxiliary tasks to train the threshold.

Our key insight is to notice that although such a loss function is piecewise-constant with respect to the threshold -- and therefore provides zero gradient to train this parameter -- the derivative of the \emph{expected loss} can be analytically derived, and is generally non-zero, albeit it is expressed in terms of probability densities of the feature activation distribution that need to be estimated. We show how to use straight-through-estimators (STEs; \citet{bengio2013estimatingpropagatinggradientsstochastic}) to estimate the gradient of the expected loss in an efficient manner, thus allowing JumpReLU SAEs to be trained using standard gradient-based methods.

\begin{figure*}[t]
\centering
\includegraphics[width=\textwidth]{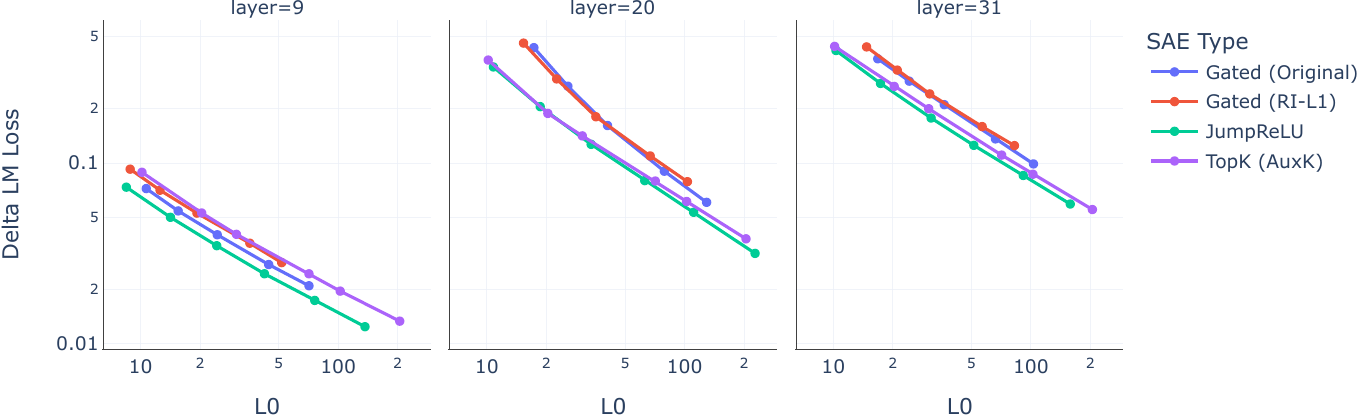}
\caption{JumpReLU SAEs offer reconstruction fidelity that equals or exceeds Gated and TopK SAEs at a fixed level of sparsity. These results are for SAEs trained on the residual stream after layers 9, 20 and 31 of Gemma 2 9B. See \cref{fig:paretos-mlp} and \cref{fig:paretos-attn} for analogous plots for SAEs trained on MLP and attention output activations at these layers.}
\label{fig:paretos}
\end{figure*}

We evaluate JumpReLU, Gated and TopK \citep{gao2024scalingevaluatingsparseautoencoders} SAEs on Gemma 2 9B \citep{gemma2_2024} residual stream, MLP output and attention output activations at several layers (\cref{fig:paretos}). At any given level of sparsity, we find JumpReLU SAEs consistently provide more faithful reconstructions than Gated SAEs. JumpReLU SAEs also provide reconstructions that are at least as good as, and often slightly better than, TopK SAEs. Similar to simple ReLU SAEs, JumpReLU SAEs only require a single forward and backward pass during a training step and have an elementwise activation function (unlike TopK, which requires a partial sort), making them more efficient to train than either Gated or TopK SAEs.

Compared to Gated SAEs, we find both TopK and JumpReLU tend to have more features that activate very frequently -- i.e.~on more than 10\% of tokens (\cref{fig:high-density-features}). Consistent with prior work evaluating TopK SAEs \citep{anthropic_june_update} we find these high frequency JumpReLU features tend to be less interpretable, although interpretability does improve as SAE sparsity increases. Furthermore, only a small proportion of SAE features have very high frequencies: fewer than 0.06\% in a 131k-width SAE. We also present the results of manual and automated interpretability studies indicating that randomly chosen JumpReLU, TopK and Gated SAE features are similarly interpretable.
\section{Preliminaries}
\label{sec:preliminaries}

% In this section we give a brief primer on training baseline and Gated SAEs using a L1 sparsity penalty. We follow the notation of \citet{rajamanoharan2024improving}.

\paragraph{SAE architectures} SAEs sparsely decompose language model activations $\mathbf{x}\in\mathbb{R}^n$ as a linear combination of a \emph{dictionary} of $M\gg n$ \emph{learned feature} directions and then reconstruct the original activations using a pair of encoder and decoder functions $\left(\mathbf{f}, \hat{\mathbf{x}}\right)$ defined by:
\begin{align}
    \mathbf{f}(\mathbf{x}) &:= \sigma \left(\Wenc\mathbf{x} + \benc \right), \label{eq:encoder}\\
    \hat{\mathbf{x}}({\mathbf{f}}) &:= \Wdec {\mathbf{f}} + \bdec\label{eq:decoder}.
\end{align}
In these expressions, $\mathbf{f}(\mathbf{x}) \in \mathbb{R}^M$ is a sparse, non-negative vector of feature magnitudes present in the input activation $\mathbf{x}$, whereas $\hat{\mathbf{x}}(\mathbf{f}) \in \mathbb{R}^n$ is a reconstruction of the original activation from a feature representation $\mathbf{f} \in \mathbb{R}^M$. The columns of $\Wdec$, which we denote by $\mathbf{d}_i$ for $i=1\ldots M$, represent the dictionary of directions into which the SAE decomposes $\mathbf{x}$. We also use $\pibf(\mathbf{x})$ in this text to denote the encoder's pre-activations:
\begin{equation}
    \pibf(\textbf{x}) := \Wenc \textbf{x} + \benc.
\end{equation}

\paragraph{Activation functions} The activation function $\sigma$ varies between architectures: \citet{bricken2023monosemanticity} and \citet{templeton2024scaling} use the ReLU activation function, whereas TopK SAEs \citep{gao2024scalingevaluatingsparseautoencoders} use a TopK activation function (which zeroes out all but the top $K$ pre-activations). Gated SAEs \citep{rajamanoharan2024improving} in their general form do not fit the specification of \cref{eq:encoder}; however with weight sharing between the two encoder kernels, they can be shown \citep[Appendix E]{rajamanoharan2024improving} to be equivalent to using a JumpReLU activation function, defined as
\begin{equation}
\jr_\theta(z) := z\,H(z - \theta)
\label{eq:jump-relu}
\end{equation}
where $H$ is the Heaviside step function\footnote{$H(z)$ is one when $z>0$ and zero when $z<0$. Its value when $z=0$ is a matter of convention -- unimportant when $H$ appears within integrals or integral estimators, as is the case in this paper.} when  $\theta > 0$ is the JumpReLU's threshold, below which pre-activations are set to zero, as shown in \cref{fig:jumprelu-function}.

\begin{figure}[t]
\centering
\includegraphics[width=0.6\columnwidth]{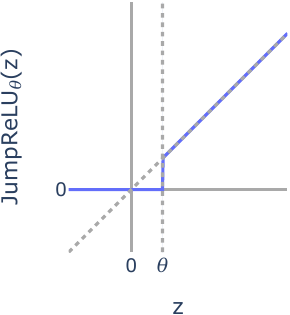}
\caption{The JumpReLU activation function zeroes inputs below the threshold, $\theta$, and is an identity function for inputs above the threshold.}
\label{fig:jumprelu-function}
\end{figure}

\paragraph{Loss functions} Language model SAEs are trained to reconstruct samples from a large dataset of language model activations $\mathbf{x} \sim \mathcal{D}$ typically using a loss function of the form
\begin{equation}
    \mathcal{L}(\mathbf{x}) := \underbrace{\norm{\mathbf{x}-\hat{\mathbf{x}}(\mathbf{f}(\mathbf{x}))}_2^2}_{\mathcal{L}_\text{reconstruct}} + \underbrace{\lambda\, S(\mathbf{f}(\mathbf{x}))}_{\mathcal{L}_\text{sparsity}} + \mathcal{L}_\text{aux},
    \label{eq:typical-loss}
\end{equation}
where $S$ is a function of the feature coefficients that penalises non-sparse decompositions and the \emph{sparsity coefficient} $\lambda$ sets the trade-off between sparsity and reconstruction fidelity. Optionally, auxiliary terms in the loss function, $\mathcal{L}_\text{aux}$ may be included for a variety of reasons, e.g.~to help train parameters that would otherwise not receive suitable gradients (used for Gated SAEs) or to resurrect unproductive (``dead'') feature directions (used for TopK). Note that TopK SAEs are trained without a sparsity penalty, since the TopK activation function directly enforces sparsity.

\paragraph{Sparsity penalties} Both the ReLU SAEs of \citet{bricken2023monosemanticity} and Gated SAEs use the L1-norm $S(\mathbf{f}) := \norm{\mathbf{f}}_1$ as a sparsity penalty. While this has the advantage of providing a useful gradient for training (unlike the L0-norm), it has the disadvantage of penalising feature magnitudes in addition to sparsity, which harms reconstruction fidelity \citep{rajamanoharan2024improving, wright2024addressing}.

The L1 penalty also fails to be invariant under reparameterizations of a SAE; by scaling down encoder parameters and scaling up decoder parameters accordingly, it is possible to arbitrarily shrink feature magnitudes, and thus the L1 penalty, without changing either the number of active features or the SAE's output reconstructions. As a result, it is necessary to impose a further constraint on SAE parameters during training to enforce sparsity: typically this is achieved by constraining columns of the decoder weight matrix $\mathbf{d}_i$ to have unit norm \citep{bricken2023monosemanticity}. \citet{conerly2024trainingsaes} introduce a modification of the L1 penalty, where feature coefficients are weighted by the norms of the corresponding dictionary directions, i.e.
\begin{equation}S_\rilo(\mathbf{f}) := \sum_{i=1}^{M} f_i \norm{\mathbf{d}_i}_2.
\label{eq:rilo-penalty}
\end{equation}
We call this the \emph{reparameterisation-invariant L1} (RI-L1) sparsity penalty, since this penalty is invariant to SAE reparameterisation, making it unnecessary to impose constraints on $\norm{\mathbf{d}_i}_2$.

\paragraph{Kernel density estimation} Kernel density estimation (KDE; \citet{parzen1962,wasserman2010statistics}) is a technique for empirically estimating probability densities from a finite sample of observations. Given $N$ samples $x_{1\ldots N}$ of a random variable $X$, one can form a kernel density estimate of the probability density $p_X(x)$ using an estimator of the form $\hat{p}_X(x) := \tfrac{1}{N\varepsilon}\sum_{\alpha=1}^N K\left(\tfrac{x - x_\alpha}{\varepsilon}\right)$, where $K$ is a non-negative function that satisfies the properties of a centred, positive-variance probability density function and $\varepsilon$ is the kernel \emph{bandwidth} parameter.\footnote{I.e.~$K(x) \geq 0$, $\int_{-\infty}^{\infty}K(x)\mathrm{d}x = 1$, $\int_{-\infty}^{\infty}x\,K(x)\mathrm{d}x = 0$ and $\int_{-\infty}^{\infty}x^2 K(x) \mathrm{d}x > 0$.} In this paper we will be actually be interested in estimating quantities like $v(y) = \mathbb{E}[f(X, Y) \vert Y=y] p_Y(y)$ for jointly distributed random variables $X$ and $Y$ and arbitrary (but well-behaved) functions $f$. Following a similar derivation as in \citet[Chapter 20]{wasserman2010statistics}, it is straightforward to generalise KDE to estimate $v(y)$ using the estimator
\begin{equation}
\hat{v}(y) := \frac{1}{N\varepsilon} \sum_{\alpha=1}^N f(x_\alpha, y_\alpha) K\left(\frac{y - y_\alpha}{\varepsilon}\right).
\label{eq:generalised-kde}
\end{equation}
\section{JumpReLU SAEs}
\label{sec:jumprelu-saes}

\begin{figure*}[t]
\centering
\begin{subfigure}[t]{0.5\textwidth}
    \raggedright
    \includegraphics[width=0.9\textwidth]{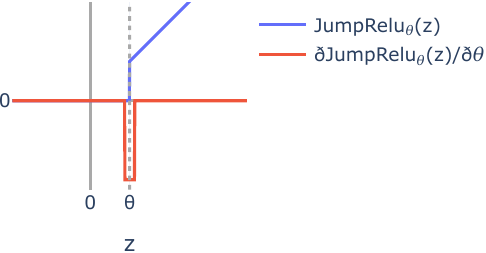}
    % \caption{}
    % \label{fig:jumprelu-with-ste}
\end{subfigure}%
~ 
\begin{subfigure}[t]{0.5\textwidth}
    \raggedleft
    \includegraphics[width=0.9\textwidth]{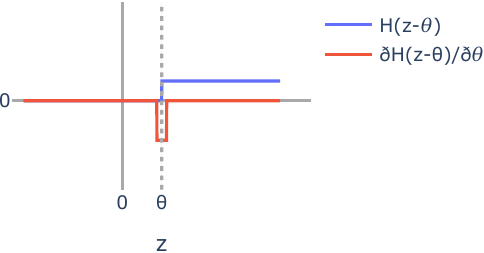}
    % \caption{}
    % \label{fig:step-with-ste}
\end{subfigure}
\caption{The JumpReLU activation function (left) and the Heaviside step function (right) used to calculate the sparsity penalty are piecewise constant with respect to the JumpReLU threshold. Therefore, in order to be able to train a JumpReLU SAE, we define the pseudo-derivatives illustrated in these plots and defined in \cref{eq:jr-ste} and \cref{eq:step-ste}, which approximate the Dirac delta functions present in the actual (weak) derivatives of the JumpReLU and Heaviside functions. These pseudo-derivatives provide a gradient signal to the threshold whenever pre-activations are within a small window of width $\varepsilon$ around the threshold. Note these plots show the profile of these pseudo-derivatives in the $z$, not $\theta$ direction, as $z$ is the stochastic input that is averaged over when computing the mean gradient.}
\label{fig:fig-ste}
\end{figure*}

A JumpReLU SAE is a SAE of the standard form \cref{eq:encoder} with a JumpReLU activation function:
\begin{equation}
\mathbf{f}(\mathbf{x}) := \jr_\thetabf\left(\Wenc \mathbf{x} + \benc \right).
\label{eq:jr-encoder}
\end{equation}
Compared to a ReLU SAE, it has an extra positive vector-valued parameter $\thetabf \in \mathbb{R}_+^M$ that specifies, for each feature $i$, the threshold that encoder pre-activations need to exceed in order for the feature to be deemed active.

Similar to the gating mechanism in Gated SAEs and the TopK activation function in TopK SAEs, the threshold $\thetabf$ gives JumpReLU SAEs the means to separate out deciding which features are active from estimating active features' magnitudes, as illustrated in \cref{fig:why-jumprelu}.

We train JumpReLU SAEs using the loss function
\begin{equation}
    \mathcal{L}(\mathbf{x}) := \underbrace{\norm{\mathbf{x}-\hat{\mathbf{x}}(\mathbf{f}(\mathbf{x}))}_2^2}_{\mathcal{L}_\text{reconstruct}} + \underbrace{\lambda\norm{\mathbf{f}(\mathbf{x})}_0}_{\mathcal{L}_\text{sparsity}}.
    \label{eq:jr-loss}
\end{equation}
This is a loss function of the standard form \cref{eq:typical-loss} where crucially we are using a L0 sparsity penalty to avoid the limitations of training with a L1 sparsity penalty \citep{wright2024addressing, rajamanoharan2024improving}. Note that we can also express the L0 sparsity penalty in terms of a Heaviside step function on the encoder's pre-activations $\pibf(\mathbf{x})$:
\begin{equation}
    \mathcal{L}_\text{sparsity} := \lambda\norm{\mathbf{f}(\mathbf{x})}_0 = \lambda \sum_{i=1}^{M}H(\pi_i(\mathbf{x}) - \theta_i).
    \label{eq:l0-step-function}
\end{equation}
The relevance of this will become apparent shortly.

The difficulty with training using this loss function is that it provides no gradient signal for training the threshold: $\thetabf$ appears only within the arguments of Heaviside step functions in both $\mathcal{L}_\text{reconstruct}$ and $\mathcal{L}_\text{sparsity}$.\footnote{The L0 sparsity penalty also provides no gradient signal for the remaining SAE parameters, but this is not necessarily a problem. It just means that the remaining SAE parameters are encouraged purely to reconstruct input activations faithfully, not worrying about sparsity, while sparsity is taken care of by the threshold parameter $\thetabf$. This is analogous to TopK SAEs, where similarly the main SAE parameters are trained solely to reconstruct faithfully, while sparsity is enforced by the TopK activation function.}
Our solution is to use straight-through-estimators (STEs; \citet{bengio2013estimatingpropagatinggradientsstochastic}), as illustrated in \cref{fig:fig-ste}. Specifically, we define the following pseudo-derivative for $\jr_\theta(z)$:\footnote{We use the notation $\pseudopartial / \pseudopartial z$ to denote pseudo-derivatives, to avoid conflating them with actual partial derivatives for these functions.}
\begin{equation}
    \frac{\pseudopartial}{\pseudopartial \theta}  \jr_\theta(z) := -\frac{\theta}{\varepsilon} K\left(\frac{z - \theta}{\varepsilon}\right)
\label{eq:jr-ste}
\end{equation}
and the following pseudo-derivative for the Heaviside step function appearing in the L0 penalty:
\begin{equation}
    \frac{\pseudopartial}{\pseudopartial \theta} H(z - \theta) :=  -\frac{1}{\varepsilon} K\left(\frac{z - \theta}{\varepsilon}\right).
\label{eq:step-ste}
\end{equation}
In these expressions, $K$ can be any valid kernel function (see \cref{sec:preliminaries}) -- i.e.~it needs to satisfy the properties of a centered, finite-variance probability density function. In our experiments, we use the rectangle function, $\text{rect}(z) := H\left(z+\tfrac{1}{2}\right) - H\left(z-\tfrac{1}{2}\right)$ as our kernel; however similar results can be obtained with other common kernels, such as the triangular, Gaussian or Epanechnikov kernel (see \cref{app:other-kernels}). As we show in \cref{sec:ste-kde-connection}, the hyperparameter $\varepsilon$ plays the role of a KDE bandwidth, and needs to be selected accordingly: too low and gradient estimates become too noisy, too high and estimates become too biased.\footnote{For the experiments in this paper, we swept this parameter and found $\varepsilon=0.001$ (assuming a dataset normalised such that $\mathbb{E}_\mathbf{x}[\mathbf{x}^2] = 1$) works well across different models, layers and sites. However, we suspect there are more principled ways to determine this parameter, borrowing from the literature on KDE bandwidth selection.}

Having defined these pseudo-derivatives, we train JumpReLU SAEs as we would any differentiable model, by computing the gradient of the loss function in \cref{eq:jr-loss} over batches of data (remembering to apply these pseudo-derivatives in the backward pass), and sending the batch-wise mean of these gradients to the optimiser in order to compute parameter updates.

In \cref{app:pseudo-code} we provide pseudocode for the JumpReLU SAE's forward pass, loss function and for implementing the straight-through-estimators defined in \cref{eq:jr-ste} and \cref{eq:step-ste} in an autograd framework like Jax \citep{jax2018github} or PyTorch \citep{paszke2019pytorchimperativestylehighperformance}.

\section{How STEs enable training through the jump}
\label{sec:ste-kde-connection}

Why does this work? The key is to notice that during SGD, we actually want to estimate the gradient of the \emph{expected} loss, $\mathbb{E}_\mathbf{x}\left[\mathcal{L}_\thetabf(\mathbf{x})\right]$, in order to calculate parameter updates;\footnote{In this section, we write the JumpReLU loss as $\mathcal{L}_\thetabf (\mathbf{x})$ to make explicit its dependence on the threshold parameter $\thetabf$.} Although the loss itself is piecewise constant with respect to the threshold parameters -- and therefore has zero gradient -- the expected loss is not.

As shown in \cref{app:expected-loss-derivative}, we can differentiate expected loss with respect to $\thetabf$ analytically to obtain
\begin{equation}
    \frac{\partial \mathbb{E}_\mathbf{x}\left[\mathcal{L}_\thetabf(\mathbf{x})\right]}{\partial \theta_i} 
    = \left(\mathbb{E}_\mathbf{x}\left[I_i(\mathbf{x}) \vert \pi_i(\mathbf{x}) = \theta_i \right] - \lambda\right) p_i(\theta_i),
    \label{eq:exact-derivative}
\end{equation}
where $p_i$ is the probability density function for the distribution of feature pre-activations $\pi_i(\mathbf{x})$ and
\begin{equation}
    I_i(\mathbf{x}) := 2 \theta_i \mathbf{d}_i \cdot (\mathbf{x} - \hat{\mathbf{x}}(\mathbf{f}(\mathbf{x}))),
    \label{eq:i-definition}
\end{equation}
recalling that $\mathbf{d}_i$ is the column of $\Wdec$ corresponding to feature $i$.\footnote{Intuitively, the first term in \cref{eq:exact-derivative} measures the rate at which the expected reconstruction loss would increase if we increase $\theta_i$ -- thereby pushing a small number of features that are currently used for reconstruction below the updated threshold. Similarly, the second term is $-\lambda$ multiplied by the rate at which the mean number of features used for reconstruction (i.e.~mean L0) would \emph{decrease} if we increase the threshold $\theta_i$. The density $p_i(\theta_i)$ comes into play because impact of a small change in $\theta_i$ on either the reconstruction loss or sparsity depends on how often feature activations occur very close to the current threshold.}

In order to train JumpReLU SAEs, we need to estimate the gradient as expressed in \cref{eq:exact-derivative} from batches of input activations, $\mathbf{x}_1, \mathbf{x}_2, \ldots, \mathbf{x}_N$. To do this, we can use a generalised KDE estimator of the form \cref{eq:generalised-kde}. This gives us the following estimator of the expected loss's gradient with respect to $\thetabf$:
\begin{equation}
    \frac{1}{N\varepsilon}\sum_{\alpha=1}^{N} \left\{I_i(\mathbf{x_\alpha}) - \lambda\right\} K \left(\frac{
    \pi_i(\mathbf{x}_\alpha)-\theta_i}{\varepsilon}\right).
    \label{eq:kde-estimate}
\end{equation}
As we show in \cref{app:ste-gives-kde}, when we instruct autograd to use the pseudo-derivatives defined in \cref{eq:jr-ste,eq:step-ste} in the backward pass, this is precisely the batch-wise mean gradient that gets calculated -- and used by the optimiser to update $\thetabf$ -- in the training loop.

In other words, training with straight-through-estimators as described in \cref{sec:jumprelu-saes} is equivalent to estimating the true gradient of the expected loss, as given in \cref{eq:exact-derivative}, using the kernel density estimator defined in \cref{eq:kde-estimate}.

\section{Evaluation}
\label{sec:experiments}

In this section, we compare JumpReLU SAEs to Gated and TopK SAEs across a range of evaluation metrics.\footnote{We did not include ProLU SAEs \citep{taggart} in our comparisons, despite their similarities to JumpReLU SAEs, because prior work has established that ProLU SAEs do not produce as faithful reconstructions as Gated or TopK SAEs at a given sparsity level \citep{gao2024scalingevaluatingsparseautoencoders}.}

To make these comparisons, we trained multiple 131k-width SAEs (with a range of sparsity levels) of each type (JumpReLU, Gated and TopK) on activations from Gemma 2 9B (base). Specifically, we trained SAEs on residual stream, attention output and MLP output sites after layers 9, 20 and 31 of the model (zero-indexed).

We trained Gated SAEs using two different loss functions. Firstly, we used the original Gated SAE loss in \citet{rajamanoharan2024improving}, which uses a L1 sparsity penalty and requires resampling \citep{bricken2023monosemanticity} -- periodic re-initialisation of dead features -- in order to train effectively. Secondly, we used a modified Gated SAE loss function that replaces the L1 sparsity penalty with the RI-L1 sparsity penalty described in \cref{sec:preliminaries}; see \cref{app:gated-sfn} for details. With this modified loss function, we no longer need to use resampling to avoid dead features.

We trained TopK SAEs using the AuxK auxiliary loss described in \citet{gao2024scalingevaluatingsparseautoencoders} with $K_\text{aux}=512$, which helps reduce the number of dead features. We also used an approximate algorithm for computing the top $K$ activations \citep{chern2022tpuknnknearestneighbor} -- implemented in JAX as \texttt{jax.lax.approx\_max\_k} -- after finding it produces similar results to exact TopK while being much faster (\cref{app:top-k}).

All SAEs used in these evaluations were trained over 8 billion tokens; by this point, they had all converged, as confirmed by inspecting their training curves. See \cref{app:further-training-details} for further details of our training methodology.

\subsection{Evaluating the sparsity-fidelity trade-off}

\paragraph{Methodology} For a fixed SAE architecture and dictionary size, we trained SAEs of varying levels of sparsity by sweeping either the sparsity coefficient $\lambda$ (for JumpReLU or Gated SAEs) or $K$ (for TopK SAEs). We then plot curves showing, for each SAE architecture, the level of reconstruction fidelity attainable at a given level of sparsity.

\paragraph{Metrics} We use the mean L0-norm of feature activations, $\mathbb{E}_{\mathbf{x}} \norm{\mathbf{f}(\mathbf{x})}_0$, as a measure of sparsity. To measure reconstruction fidelity, we use two metrics:
\begin{compactitem}
\item Our primary metric is delta LM loss, the increase in the cross-entropy loss experienced by the LM when we splice the SAE into the LM's forward pass.
\item As a secondary metric, we also present in \cref{fig:paretos-fvu} curves that use fraction of variance unexplained (FVU) -- also called the normalized loss \citep{gao2024scalingevaluatingsparseautoencoders} as a measure of reconstruction fidelity. This is the mean reconstruction loss $\mathcal{L}_\text{reconstruct}$ of a SAE normalised by the reconstruction loss obtained by always predicting the dataset mean.
\end{compactitem}
All metrics were computed on 2,048 sequences of length 1,024, after excluding special tokens (pad, start and end of sequence) when aggregating the results.

\paragraph{Results} \cref{fig:paretos} compares the sparsity-fidelity trade-off for JumpReLU, Gated and TopK SAEs trained on Gemma 2 9B residual stream activations. JumpReLU SAEs consistently offer similar or better fidelity at a given level of sparsity than TopK or Gated SAEs. Similar results are obtained for SAEs of each type trained on MLP or attention output activations, as shown in \cref{fig:paretos-mlp} and \cref{fig:paretos-attn} in \cref{app:more-benchmarking-results}.

\subsection{Feature activation frequencies}

For a given SAE, we are interested in both the proportion of learned features that are active very frequently and the proportion of features that are almost never active (``dead'' features). Prior work has found that TopK SAEs tend to have more high frequency features than Gated SAEs \citep{anthropic_june_update}, and that these features tend to be less interpretable when sparsity is also low.

\paragraph{Methodology} We collected SAE feature activation statistics over 10,000 sequences of length 1,024, and computed the frequency with which individual features fire on a randomly chosen token (excluding special tokens).

\begin{figure*}[t]
\centering
\begin{subfigure}[t]{\textwidth}
    \centering
    \includegraphics[width=\textwidth]{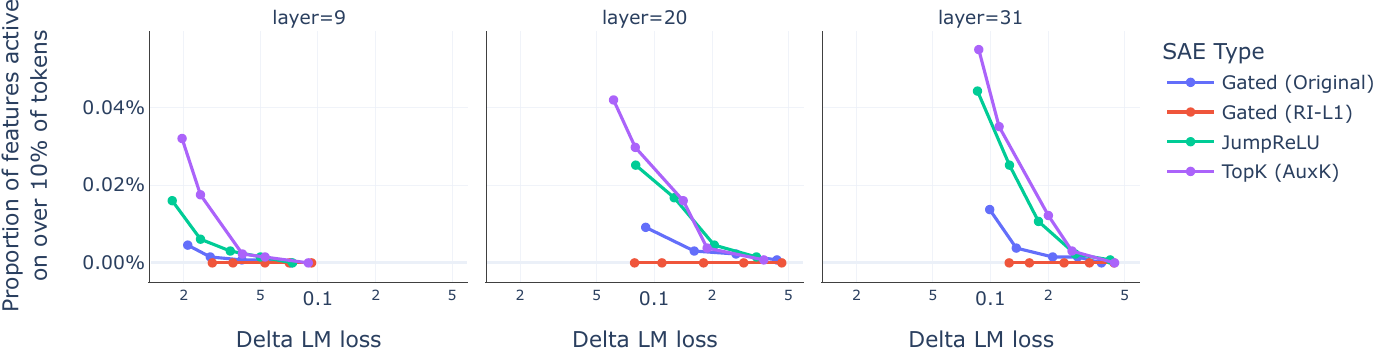}
    % \caption{}
    % \label{fig:jumprelu-with-ste}
\end{subfigure}%
\vspace{16pt}
\begin{subfigure}[t]{\textwidth}
    \centering
    \includegraphics[width=\textwidth]{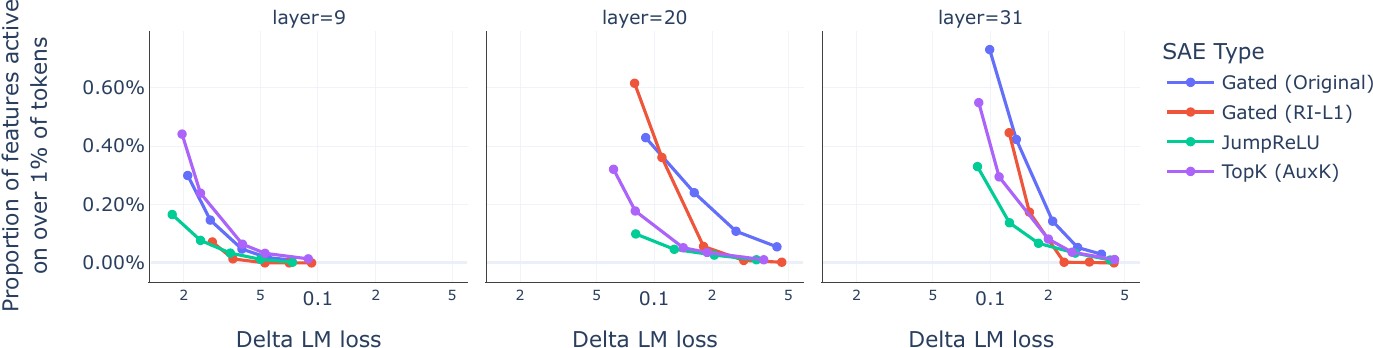}
    % \caption{}
    % \label{fig:step-with-ste}
\end{subfigure}
\caption{The proportion of features that activate very frequently versus delta LM loss by SAE type for Gemma 2 9B residual stream SAEs. TopK and JumpReLU SAEs tend to have relatively more very high frequency features -- those active on over 10\% of tokens (top) -- than Gated SAEs. If we instead count features that are active on over 1\% of tokens (bottom), the picture is more mixed: Gated SAEs can have more of these high (but not necessarily very high) features than JumpReLU SAEs, particularly in the low loss (and therefore lower sparsity) regime.}
\label{fig:high-density-features}
\end{figure*}

\paragraph{Results} \cref{fig:high-density-features} shows, for JumpReLU, Gated and TopK SAEs, how the fraction of high frequency features varies with SAE fidelity (as measured by delta LM loss). TopK and JumpReLU SAEs consistently have more very high frequency features -- features that activate on over 10\% of tokens (top plot) -- than Gated SAEs, although the fraction drops close to zero for SAEs in the low fidelity / high sparsity regime. On the other hand, looking at features that activate on over 1\% of tokens (a wider criterion), Gated SAEs have comparable numbers of such features to JumpReLU SAEs (bottom plot), with considerably more in the low delta LM loss / higher L0 regime (although all these SAEs have L0 less than 100, i.e.~are reasonably sparse). Across all layers and frequency thresholds, JumpReLU SAEs have either similar or fewer high frequency features than TopK SAEs. Finally, it is worth noting that in all cases the number of high frequency features remains low in proportion to the widths of these SAEs, with fewer than 0.06\% of features activating more than 10\% of the time even for the highest L0 SAEs.

\cref{fig:dead-features} compares the proportion of ``dead'' features -- which we defined to be features that activate on fewer than one in $10^7$ tokens -- between JumpReLU, Gated and TopK SAEs. Both JumpReLU SAEs and TopK SAEs (trained with the AuxK loss) consistently have few dead features, without the need for resampling.

\subsection{Interpretability of SAE features}
\label{sec:interp-of-features}
Exactly how to assess the quality of the features learned by an SAE is an open research question. Existing work has focused on the activation patterns of features with particular emphasis paid to sequences a feature activates most strongly on~\citep{bricken2023monosemanticity,templeton2024scaling,rajamanoharan2024improving,cunningham2023sparse,bills2023language}. The rating of a feature's interpretability is usually either done by human raters or by querying a language model. In the following two sections we evaluate the interpretability of JumpReLU, Gated and TopK SAE features using both a blinded human rating study, similar to~\citet{bricken2023monosemanticity,rajamanoharan2024improving}, and automated ratings using a language model, similar to~\citet{bricken2023monosemanticity,bills2023language,cunningham2023sparse,lieberum2024interpreting}.

\subsubsection{Manual Interpretability}
\label{sec:interpretability-study}
\paragraph{Methodology}
Our experimental setup closely follows~\citet{rajamanoharan2024improving}.
For each sublayer (Attention Output, MLP Output, Residual Stream), each layer (9, 20, 31) and each architecture (Gated, TopK, JumpReLU) we picked three SAEs to study, for a total of 81 SAEs. SAEs were selected based on their average number of active features. We selected those SAEs which had an average number of active features closest to 20, 75 and 150.

Each of our 5 human raters was presented with summary information and activating examples from the full activation spectrum of a feature. A rater rated a feature from every SAE, presented in a random order. The rater then decided whether a feature is mostly monosemantic based on the information provided, with possible answer options being `Yes', `Maybe', and `No', and supplied a short explanation of the feature where applicable. In total we collected 405 samples, i.e.~5 per SAE.

\paragraph{Results}

In \cref{fig:manual_interp_main}, we present the results of the manual interpretability study. Assuming a binomial 1-vs-all distribution for each ordinal rating value, we report the 2.5th to 97.5th percentile of this distribution as confidence intervals. All three SAE varieties exhibit similar rating distributions, consistent with prior results comparing TopK and Gated SAEs~\citep{anthropic_june_update,gao2024scalingevaluatingsparseautoencoders} and furthermore showing that JumpReLU SAEs are similarly interpretable.

\begin{figure}[t]
\centering
\includegraphics[width=1.0\columnwidth]{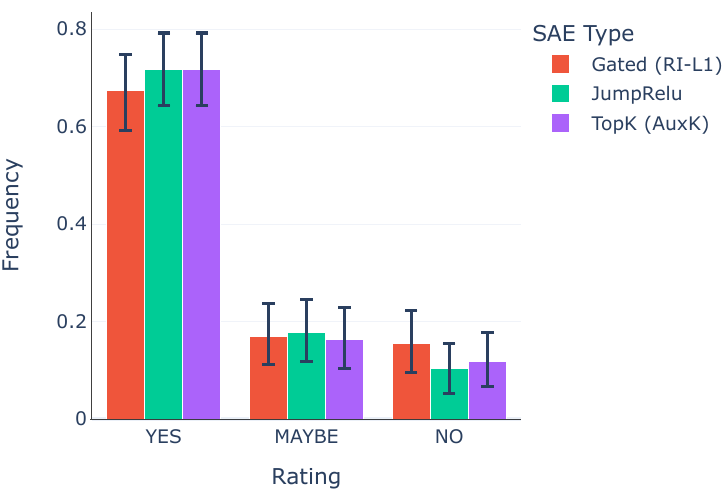}
\caption{Human rater scores of feature interpretability. Features from all SAE architectures are rated as similarly interpretable by human raters.}
\label{fig:manual_interp_main}
\end{figure}

\subsubsection{Automated Interpretability}
\label{sec:automated-interp}
In contrast to the manual rating of features, automated rating schemes have been proposed to speed up the evaluation process. The most prominent approach is a two step process of generating an explanation for a given feature with a language model and then predicting the feature's activations based on that explanation, again utilizing a language model. This was initially proposed by~\citet{bills2023language} for neurons, and later employed by~\citet{bricken2023monosemanticity,lieberum2024interpreting,cunningham2023sparse} for learned SAE features.

\paragraph{Methodology}
We used Gemini Flash \citep{geminiteam2024gemini15unlockingmultimodal} for explanation generation and activation simulation. In the first step, we presented Gemini Flash with a list of sequences that activate a given feature to different degrees, together with the activation values. The activation values were binned and normalized to be integers between 0 and 10. Gemini Flash then generated a natural language explanation of the feature consistent with the activation values.

In the second step we asked Gemini Flash to predict the activation value for each token of the sequences that were used to generate the explanations\footnote{Note that the true activation values were not known to the model at simulation time.}. We then computed the correlation between the simulated and ground truth activation values. We found that using a diverse few-shot prompt for both explanation generation and activation simulation was important for consistent results.

We computed the correlation score for 1000 features of each SAE, i.e.~three architectures, three layers, three layers/sub-layers and five or six sparsity levels, or 154 SAEs in total.

\paragraph{Results}

We show the distribution of Pearson correlations between language model simulated and ground truth activations in \cref{fig:auto_interp_main}. There is a small but notable improvement in mean correlation from Gated to JumpReLU and from JumpReLU. Note however, that the means clearly do not capture the extent of the within-group variation. We also report a baseline of explaining the activations of a randomly initialized JumpReLU SAE for the layer 20 residual stream -- effectively producing random, clipped projections of the residual stream. This exhibits markedly worse correlation scores, though notably with a clearly non-zero mean. We show the results broken down by site and layer in \cref{fig:auto_interp_app}. Note that in all of these results we are grouping together SAEs with very different sparsity levels and corresponding performances.

\begin{figure}[t]
\centering
\includegraphics[width=1.0\columnwidth]{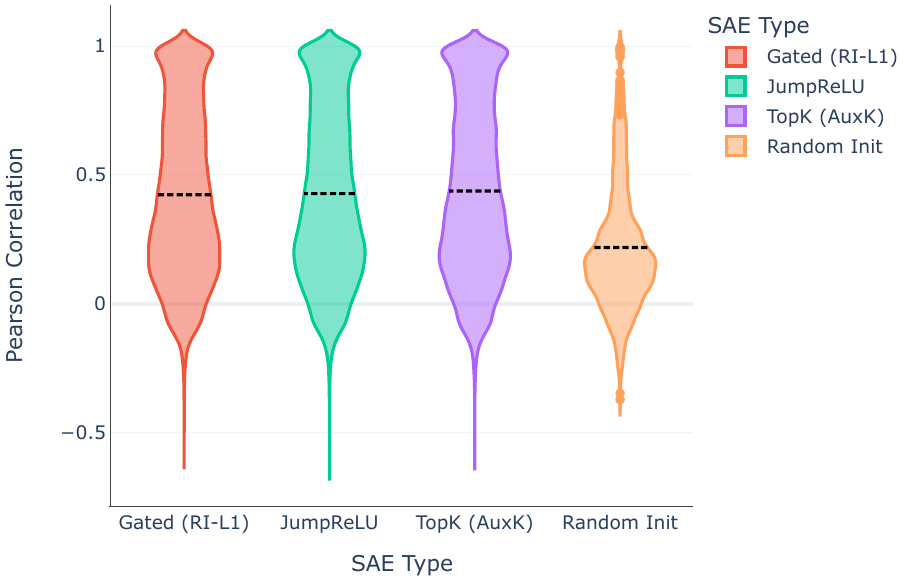}
\caption{Pearson correlation between LM-simulated and ground truth activations. The dashed lines denote the mean per SAE type. Values above 1 are an artifact of the kernel density estimation used to produce the plot.}
\label{fig:auto_interp_main}
\end{figure}

\section{Related work}

Recent interest in training SAEs on LM activations \citep{sharkey2022interim,bricken2023monosemanticity,cunningham2023sparse} stems from the twin observations that many concepts appear to be linearly represented in LM activations \citep{elhage2021mathematical,gurnee2023finding,olah2020zoom,park2023linear} and that dictionary learning \citep{mallat1993matchingpursuits,olshausen} may help uncover these representations at scale. It is also hoped that the sparse representations learned by SAEs may be a better basis for identifying computational subgraphs that carry out specific tasks in LMs \citep{wang2023interpretability,conmy2023automated,dunefsky2024transcodersinterpretablellmfeature} and for finer-grained control over LMs' outputs \citep{conmy2024activation,templeton2024scaling}.

Recent improvements to SAE architectures -- including TopK SAEs \citep{gao2024scalingevaluatingsparseautoencoders} and Gated SAEs \citep{rajamanoharan2024improving} -- as well as improvements to initialization and sparsity penalties. \cite{conerly2024trainingsaes} have helped ameliorate the trade-off between sparsity and fidelity and overcome the challenge of SAE features dying during training. Like JumpReLU SAEs, both Gated and TopK SAEs possess a thresholding mechanism that determines which features to include in a reconstruction; indeed, with weight sharing, Gated SAEs are mathematically equivalent to JumpReLU SAEs, although they are trained using a different loss function. JumpReLU SAEs are also closely related to ProLU SAEs \citep{taggart}, which use a (different) STE to train an activation threshold, but do not match the performance of Gated or TopK SAEs \citep{gao2024scalingevaluatingsparseautoencoders}.

The activation function defined in \cref{eq:jump-relu} was named JumpReLU in \citet{erichson2019jumprelu}, although it appears in earlier work, such as the TRec function in \citet{konda2015zerobiasautoencodersbenefitscoadapting}. Both TopK and JumpReLU activation functions are closely related to activation pruning techniques such as ASH \citep{djurisic2023extremelysimpleactivationshaping}.

The term \emph{straight through estimator} was introduced in \citet{bengio2013estimatingpropagatinggradientsstochastic}, although it is an old idea.\footnote{Even the Perceptron learning algorithm \citep{rosenblatt1958perceptron} can be understood as using a STE to train through a step function discontinuity.} STEs have found applications in areas such as training quantized networks (e.g.~\citet{hubara2016quantizedneuralnetworkstraining}) and circumventing defenses to adversarial examples \citep{athalye2018obfuscatedgradientsfalsesense}. Our interpretation of STEs in terms of gradients of the expected loss is related to \citet{yin2019understandingstraightthroughestimatortraining}, although they do not make the connection between STEs and KDE. \citet{louizos2017learning} also show how it is possible to train models using a L0 sparsity penalty -- on weights rather than activations in their case -- by introducing stochasticity in the weights and taking the gradient of the expected loss.

\section{Discussion}

Our evaluations show that JumpReLU SAEs produce reconstructions that consistently match or exceed the faithfulness of TopK SAEs, and exceed the faithfulness of Gated SAEs, at a given level of sparsity. They also show that the average JumpReLU SAE feature is similarly interpretable to the average Gated or TopK SAE feature, according to manual raters and automated evaluations. Although JumpReLU SAEs do have relatively more very high frequency features than Gated SAEs, they are similar to TopK SAEs in this respect.

In light of these observations, and taking into account the efficiency of training with the JumpReLU loss -- which requires no auxiliary terms and does not involve relatively expensive TopK operations -- we consider JumpReLU SAEs to be a mild improvement over prevailing SAE training methodologies.

Nevertheless, we note two key limitations with our study:
\begin{compactitem}
\item The evaluations presented in this paper concern training SAEs on several sites and layers of a single model, Gemma 2 9B. This does raise uncertainty over how well these results would transfer to other models -- particularly those with slightly different architectural or training details. In mitigation, although we have not presented the results in this paper, our preliminary experiments with JumpReLU on the Pythia suite of models \citep{biderman2023pythia} produced very similar results, both when comparing the sparsity-fidelity trade off between architectures and comparing interpretability. Nevertheless we would welcome attempts to replicate our results on other model families.
\item The science of principled evaluations of SAE performance is still in its infancy. Although we measured feature interpretability -- both assessed by human raters and by the ability of Gemini Flash to predict new activations given activating examples -- it is unclear how well these measures correlate to the attributes of SAEs that actually make them useful for downstream purposes. It would be valuable to evaluate these SAE varieties on a broader selection of metrics that more directly correspond to the value SAEs add by aiding or enabling downstream tasks, such as circuit analysis or model control.
\end{compactitem}

Finally, JumpReLU SAEs do suffer from a few limitations that we hope can be improved with further work:
\begin{compactitem}
\item Like TopK SAEs, JumpReLU SAEs tend to have relatively more very high frequency features -- features that are active on more than 10\% of tokens -- than Gated SAEs. Although it is hard to see how to reduce the prevalence of such features with TopK SAEs, we expect it to be possible to further tweak the loss function used to train JumpReLU SAEs to directly tackle this phenomenon.\footnote{Although, it could be the case that by doing this we end up pushing the fidelity-vs-sparsity curve for JumpReLU SAEs back closer to those of Gated SAEs. I.e.~it is plausible that Gated SAEs are close to the Pareto frontier attainable by SAEs that do not possess high frequency features.}
\item JumpReLU SAEs introduce new hyperparameters -- namely the initial value of $\thetabf$ and the bandwidth parameter $\varepsilon$ -- that require selecting. In practice, we find that, with dataset normalization in place, the default hyperparameters used in our experiments (\cref{app:further-training-details}) transfer quite reliably to other models, sites and layers. Nevertheless, there may be more principled ways to choose these hyperparameters, for example by adopting approaches to automatically selecting bandwidths from the literature on kernel density estimation.
\item The STE approach introduced in this paper is quite general. For example, we have also used STEs to train JumpReLU SAEs that have a sparsity level closed to some desired target $L_0^\text{target}$ by using the sparsity loss
\begin{equation}
    \mathcal{L}_\text{sparsity}(\mathbf{x}) = \lambda \left(\norm{\mathbf{f}(\mathbf{x})}_0 / L_0^\text{target} - 1\right)^2,
\end{equation}
much as it is possible to fix the sparsity of a TopK SAE by setting $K$ (see \cref{app:l0-target}). STEs thus open up the possibility of training SAEs with other discontinuous loss functions that may further improve SAE quality or usability.
\end{compactitem}

\section{Acknowledgements}

We thank Lewis Smith for reviewing the paper, including checking its mathematical derivations, and for valuable contributions to the SAE training codebase. We also thank Tom Conerly and Tom McGrath for pointing out errors in an earlier version of \cref{app:pseudo-code}. Finally, we are grateful to Rohin Shah and Anca Dragan for their sponsorship and support during this project.

\section{Author contributions}

Senthooran Rajamanoharan (SR) conceived the idea of training JumpReLU SAEs using the gradient of the expected loss, and developed the approach of using STEs to estimate this gradient. SR also performed the hyperparameter studies and trained the SAEs used in all the experiments. SAEs were trained using a codebase that was designed and implemented by Vikrant Varma and Tom Lieberum (TL) with significant contributions from Arthur Conmy, which in turn relies on an interpretability codebase written in large part by J\'anos Kram\'ar. TL was instrumental in scaling up the SAE training codebase so that we were able to iterate effectively on a 9B sized model for this project. TL also ran the SAE evaluations and manual interpretability study presented in the Evaluations section. Nicolas Sonnerat (NS) and TL designed and implemented the automated feature interpretation pipeline used to perform the automated interpretability study, with NS also leading the work to scale up the pipeline. SR led the writing of the paper, with the interpretability study sections and \cref{app:more-benchmarking-results} contributed by TL. Neel Nanda provided leadership and advice throughout the project and edited the paper.
\bibliography{main}

% \onecolumn

\appendix

\section{Differentiating integrals involving Heaviside step functions}

We start by reviewing some results about differentiating integrals (and expectations) involving Heaviside step functions.

\begin{lemma}
\label{lem:y-density}
Let $\mathbf{X}$ be a $n$-dimensional real random variable with probability density $p_\mathbf{X}$ and let $Y = g(\mathbf{X})$ for a differentiable function $g:\mathbb{R}^n \to \mathbb{R}$. Then we can express the probability density function of $Y$ as the surface integral
\begin{equation}
    p_Y(y) = \int_{\partial V(y)} p_\mathbf{X}(\mathbf{x}') \ud S
    \label{eq:y-density}
\end{equation}
where $\partial V(y)$ is the surface $g(\mathbf{x}) = y$ and $\ud S$ is its surface element.
\end{lemma}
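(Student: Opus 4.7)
The plan is to derive \eqref{eq:y-density} by differentiating the cumulative distribution function of $Y$, leveraging the Heaviside representation central to this paper. First I would write
\begin{equation}
F_Y(y) = P(Y \le y) = \int_{\mathbb{R}^n} H(y - g(\mathbf{x}'))\, p_\mathbf{X}(\mathbf{x}')\, \ud^n \mathbf{x}',
\end{equation}
recognising that $H(y - g(\mathbf{x}'))$ is precisely the indicator of $V(y) = \{g \le y\}$. Differentiating with respect to $y$ under the integral and using that the weak derivative of $H$ is a Dirac delta gives the distributional identity
\begin{equation}
p_Y(y) = \int_{\mathbb{R}^n} \delta(y - g(\mathbf{x}'))\, p_\mathbf{X}(\mathbf{x}')\, \ud^n \mathbf{x}'.
\end{equation}

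The second step is to reinterpret this volume integral, concentrated on the level set $\{g = y\}$, as an ordinary surface integral over $\partial V(y)$. I would invoke the co-area formula: for sufficiently regular $g$, $\int_{\mathbb{R}^n} F(\mathbf{x})\, |\nabla g(\mathbf{x})|\, \ud^n \mathbf{x} = \int_{-\infty}^{\infty} \int_{g=t} F\, \ud \mathcal{H}^{n-1}\, \ud t$. Applying this with $F = p_\mathbf{X} / |\nabla g|$ (valid on the set where $\nabla g \ne 0$, which we may assume has full $p_\mathbf{X}$-measure under the implicit regularity assumptions) and reading off the inner integrand produces $p_Y(y) = \int_{\partial V(y)} p_\mathbf{X}(\mathbf{x}')\, \ud S$, where $\ud S$ is understood as the weighted surface measure $\ud \mathcal{H}^{n-1} / |\nabla g|$ — the natural convention making \eqref{eq:y-density} hold as written.

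The main obstacle is precisely this last step: the Dirac-delta manipulation needs to be made rigorous, and the meaning of $\ud S$ has to be pinned down, since a naive choice of Hausdorff measure would leave a spurious $|\nabla g|$ factor. A cleaner distribution-free alternative I would keep in reserve is to use the implicit function theorem to straighten $g$ to the first coordinate in a tubular neighbourhood of each regular level set, apply Fubini to $\int_{V(y)} p_\mathbf{X}$ in the straightened coordinates, and then differentiate under the integral; the Jacobian of the straightening map supplies the $1/|\nabla g|$ factor explicitly and matches the co-area reading of $\ud S$. Both routes deliver the stated identity, so I would present the shorter Heaviside/delta derivation in the main text and relegate the co-area justification to a remark.
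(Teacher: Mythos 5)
Your proof is correct and starts from the same place as the paper's -- writing $p_Y(y)$ as the $y$-derivative of $\mathbb{P}(Y \le y) = \int_{V(y)} p_\mathbf{X}\,\ud^n\mathbf{x}$ -- but finishes with different machinery: the paper invokes the multidimensional Leibniz integral rule in a single line, whereas you pass through the Heaviside/Dirac-delta representation and the co-area formula. The two are equivalent (the Leibniz rule applied to the moving boundary $g(\mathbf{x}) = y$ contributes a factor equal to that boundary's normal velocity $1/|\nabla g|$, which is exactly the weight the co-area formula attaches), but your route has the advantage of forcing the meaning of $\ud S$ into the open: for \cref{eq:y-density} to hold as written, $\ud S$ must be the weighted measure $\ud \mathcal{H}^{n-1}/|\nabla g|$ rather than ordinary surface area, a convention the paper's proof leaves buried inside the Leibniz rule. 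The cost is that you must state the regularity caveat ($\nabla g \neq 0$ on a set of full $p_\mathbf{X}$-measure at level $y$) explicitly, which you do. Either presentation serves; yours is slightly longer but harder to misread when the lemma is reused in \cref{thm:ev-derivative}, where the same $\ud S$ convention must be applied consistently for the substitution $p_\mathbf{X}(\mathbf{x}) = p_{\mathbf{X}|Y=y}(\mathbf{x})\,p_Y(y)$ to cancel correctly.
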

\begin{proof}
From the definition of a probability density function:
\begin{align}
    p_Y(y) &:= \frac{\partial}{\partial y} \mathbb{P}\left(Y < y\right) \\
    &\phantom{:}= \frac{\partial}{\partial y} \int_{V(y)} p_\mathbf{X} (\mathbf{x}) \ud^n \mathrm{x}
\end{align}
where $V(y)$ is the volume $g(\mathbf{x}) < y$. \cref{eq:y-density} follows from an application of the multidimensional Leibniz integral rule.
\end{proof}

\begin{theorem}
\label{thm:ev-derivative}
Let $\mathbf{X}$ and $y$ once again be defined as in \cref{lem:y-density}. Also define
\begin{equation}
    A(y) := \mathbb{E}\left[f(\mathbf{X}) H(g(\mathbf{X}) - y))\right]
\end{equation}
where $H$ is the Heaviside step function for some function $f:\mathbb{R}^n \to \mathbb{R}$. Then, as long as $f$ is differentiable on the surface $g(\mathbf{x}) = y$, the derivative of $A$ at $y$ is given by
\begin{equation}
    A'(y) = - \mathbb{E}\left[f(\mathbf{X}) \vert Y=y \right] p_Y(y)
    \label{eq:cond-density}
\end{equation}
\end{theorem}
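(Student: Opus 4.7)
The plan is to mimic the proof of Lemma~\ref{lem:y-density}, since $A(y)$ has the same ``cumulative distribution'' structure, but with the weighted (signed) density $f(\mathbf{x}) p_\mathbf{X}(\mathbf{x})$ playing the role that $p_\mathbf{X}(\mathbf{x})$ plays in the lemma.

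First, I would rewrite $A(y)$ as a volume integral. Using $H(g(\mathbf{x}) - y) = 1 - H(y - g(\mathbf{x}))$ (away from the measure-zero surface $g(\mathbf{x}) = y$), we have
\begin{equation}
A(y) = \mathbb{E}[f(\mathbf{X})] - \int_{V(y)} f(\mathbf{x}) p_\mathbf{X}(\mathbf{x}) \ud^n x,
\end{equation}
where $V(y) = \{\mathbf{x} : g(\mathbf{x}) < y\}$ is the same sub-level set that appears in Lemma~\ref{lem:y-density}. The first term is $y$-independent and vanishes under differentiation. Applying the multidimensional Leibniz integral rule to the second term --- exactly as in the proof of Lemma~\ref{lem:y-density}, and using the hypothesis that $f$ is differentiable on the level surface $g(\mathbf{x}) = y$ --- produces
\begin{equation}
A'(y) = -\int_{\partial V(y)} f(\mathbf{x}) p_\mathbf{X}(\mathbf{x}) \ud S,
\end{equation}
with the minus sign inherited from the subtraction.

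Finally, I would identify this surface integral with $\mathbb{E}[f(\mathbf{X}) \vert Y = y] p_Y(y)$. The cleanest route is disintegration: the same derivation that gave \cref{eq:y-density} in Lemma~\ref{lem:y-density}, applied to the (signed) measure $f(\mathbf{x}) p_\mathbf{X}(\mathbf{x}) \ud^n x$ in place of the probability measure $p_\mathbf{X}(\mathbf{x}) \ud^n x$, yields
\begin{equation}
\int_{\partial V(y)} f(\mathbf{x}) p_\mathbf{X}(\mathbf{x}) \ud S = \mathbb{E}[f(\mathbf{X}) \vert Y = y] p_Y(y),
\end{equation}
which together with the previous display gives \cref{eq:cond-density}.

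The main obstacle is this last identification: it relies on the same surface-element convention implicit in Lemma~\ref{lem:y-density} (which absorbs the $1/\lVert \nabla g \rVert$ coarea factor into $\ud S$), and on enough regularity of $f$ along the level set for the pointwise disintegration to make sense. A more elementary alternative bypasses the surface integral entirely: by the tower property, $A(y) = \int_y^\infty \mathbb{E}[f(\mathbf{X}) \vert Y = y'] p_Y(y') \ud y'$, and differentiating in $y$ via the fundamental theorem of calculus immediately delivers \cref{eq:cond-density}.
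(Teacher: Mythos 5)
Your main argument is essentially the paper's: the paper writes $A(y)$ directly as a volume integral of $f(\mathbf{x})\,p_\mathbf{X}(\mathbf{x})$ over the super-level set $\{g(\mathbf{x})>y\}$, applies the multidimensional Leibniz rule to obtain $A'(y) = -\int_{\partial V(y)} f(\mathbf{x})\,p_\mathbf{X}(\mathbf{x})\,\mathrm{d}S$, and then identifies this surface integral with $\mathbb{E}\left[f(\mathbf{X})\mid Y=y\right]p_Y(y)$ by factoring $p_\mathbf{X}(\mathbf{x}) = p_{\mathbf{X}\mid Y=y}(\mathbf{x})\,p_Y(y)$ and invoking \cref{eq:y-density} --- the same three steps you carry out, up to your harmless complementation $H(g-y)=1-H(y-g)$. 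Your closing aside via the tower property and the fundamental theorem of calculus is a genuinely more elementary route that sidesteps the surface-element and coarea-factor bookkeeping you rightly flag as the delicate point, but the proof you actually give coincides with the paper's.
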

\begin{proof}
We can express $A(y)$ as the volume integral
\begin{equation}
    A(y) = \int_{V(y)} f(\mathbf{x}) p_\mathbf{X}(\mathbf{x}) \ud^n \mathbf{x}
\end{equation}
where $V(y)$ is now the volume $g(\mathbf{x}) > y$. Applying the multidimensional Leibniz integral rule (noting that $f$ is differentiable on the boundary of $V(y)$, we therefore obtain
\begin{equation}
    A'(y) = - \int_{\partial V(y)} f(\mathbf{x}) p_\mathbf{X}(\mathbf{x}) \ud S
    \label{eq:cond-density-as-integral}
\end{equation}
where $\partial V$ is the surface $g(\mathbf{x}) = y$. \cref{eq:cond-density} follows by noting that $p_\mathbf{X}(\mathbf{x}) = p_{\mathbf{X}|Y=y}(\mathbf{x}) p_Y(y)$ and thus substituting \cref{eq:y-density} into \cref{eq:cond-density-as-integral}.
\end{proof}

\begin{lemma}
With the same definitions as in \cref{thm:ev-derivative}, the expected value
\begin{equation}
    B(y) := \mathbb{E}\left[f(\mathbf{X}) H(g(\mathbf{X}) - y))^2\right],
\end{equation}
which involves the square of the Heaviside step function, is equal to $A(y)$.
\end{lemma}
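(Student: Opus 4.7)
The claim is essentially tautological once one recalls the range of the Heaviside step function, so the plan is simply to reduce $B(y)$ to $A(y)$ by a pointwise identification of the integrands.

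\textbf{Step 1: Pointwise identity for $H^2$.} The Heaviside step function $H$ takes values in $\{0,1\}$ for every real input (this holds regardless of the convention chosen at $z=0$, since both $0$ and $1$ square to themselves). Therefore, for every $z \in \mathbb{R}$, we have $H(z)^2 = H(z)$. Applied to $z = g(\mathbf{x}) - y$, this gives the pointwise identity
\begin{equation}
f(\mathbf{x}) H(g(\mathbf{x}) - y)^2 = f(\mathbf{x}) H(g(\mathbf{x}) - y)
\end{equation}
for every $\mathbf{x} \in \mathbb{R}^n$ (and in particular almost surely under the distribution of $\mathbf{X}$).

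\textbf{Step 2: Pass to expectations.} Since the two integrands agree pointwise, they agree as random variables, so their expectations coincide:
\begin{equation}
B(y) = \mathbb{E}\!\left[f(\mathbf{X}) H(g(\mathbf{X}) - y)^2\right] = \mathbb{E}\!\left[f(\mathbf{X}) H(g(\mathbf{X}) - y)\right] = A(y),
\end{equation}
provided the expectation exists, which is implicitly assumed by the hypothesis that $A(y)$ is well-defined.

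\textbf{Main obstacle.} There is essentially no obstacle: the content of the lemma is the algebraic fact $H^2 = H$, i.e., that $H$ is the indicator of a set. The only pseudo-subtlety worth mentioning is the value of $H$ at $0$, but since $0^2 = 0$ and $1^2 = 1$, the identity $H(z)^2 = H(z)$ holds under any convention, and the set $\{g(\mathbf{X}) = y\}$ would in any case contribute measure zero whenever $Y$ possesses a density (as in the setting of \cref{lem:y-density}). The purpose of the lemma is presumably downstream: it lets one freely interchange $H$ and $H^2$ inside expectations, which is convenient when manipulating expressions that arise from squaring reconstruction-loss terms involving $H$.
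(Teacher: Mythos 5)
Your proof is correct and rests on the same observation as the paper's: $H$ is (up to a measure-zero set) an indicator, so $H^2=H$ and the two expectations coincide — the paper phrases this by noting both integrals share the domain $g(\mathbf{x})>y$ and integrand, which is the same idea. Your remark that the boundary set $\{g(\mathbf{X})=y\}$ has measure zero is a sensible extra precaution given that the paper elsewhere adopts the convention $H(0)=\tfrac{1}{2}$, for which $H(0)^2\neq H(0)$.
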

\begin{proof}
Expressed in integral form, both $A(y)$ and $B(y)$ have the same domains of integration (the volume $g(\mathbf{x}) > y$) and integrands; therefore their values are identical.
\end{proof}

\section{Differentiating the expected loss}
\label{app:expected-loss-derivative}

The JumpReLU loss is given by
\begin{equation*}
    \mathcal{L}_\thetabf (\mathbf{x}) := \norm{\mathbf{x}-\hat{\mathbf{x}}(\mathbf{f}(\mathbf{x}))}_2^2 + \lambda \norm{\mathbf{f}(\mathbf{x})}_0.
    \tag{\ref{eq:jr-loss}}
\end{equation*}
By substituting in the following expressions for various terms in the loss:
\begin{align}
f_i(\mathbf{x}) &= \pi_i(\mathbf{x}) H(\pi_i(\mathbf{x}) - \theta_i), \\
\hat{x}(\mathbf{f}) &= \sum_{i=1}^M f_i(\mathbf{x}) \mathbf{d}_i + \bdec, \\
\norm{\mathbf{f}(\mathbf{x})}_0 &= \sum_{i=1}^M H(\pi_i(\mathbf{x}) - \theta_i),
\end{align}
taking the expected value, and differentiating (making use of the results of the previous section), we obtain
\begin{equation}
\frac{\partial \mathbb{E}_\mathbf{x}\left[\mathcal{L}_\thetabf(\mathbf{x})\right]}{\partial \theta_i} 
    = \left(\mathbb{E}_\mathbf{x} \left[J_i(\mathbf{x})\vert \pi_i(\mathbf{x}) = \theta_i\right] - \lambda\right) p_i(\theta_i)
    \label{eq:exact-derivative-j}
\end{equation}
where $p_i$ is the probability density function for the pre-activation $\pi_i(\mathbf{x})$ and
\begin{multline}
    J_i(\mathbf{x}) := 2 \theta_i \mathbf{d}_i \cdot \bigg[
    \mathbf{x} - \bdec - \tfrac{1}{2}\theta_i\mathbf{d}_i \\
    - \sum_{j\neq i}^{M} \pi_j(\mathbf{x})\mathbf{d}_j H(\pi_j(\mathbf{x}) - \theta_j)\bigg].
\end{multline}
We can express this derivative in the more succinct form given in \cref{eq:exact-derivative} and \cref{eq:i-definition} by defining
\begin{align}
    I_i(\mathbf{x}) &:= 2\theta_i \mathbf{d}_i \cdot \left[\mathbf{x} - \hat{\mathbf{x}}(\mathbf{f}(\mathbf{x}))\right] \\
    &\phantom{:}= 2\theta_i \mathbf{d}_i \cdot \bigg[\mathbf{x} - \bdec \\&\phantom{:= 2\theta_i \mathbf{d}_i \cdot \bigg[}-\sum_{j=1}^M \pi_j(\mathbf{x}) \mathbf{d}_j H(\pi_j(\mathbf{x}) - \theta_j)\bigg].
    \nonumber
\end{align}
and adopting the convention $H(0) := \tfrac{1}{2}$; this means that $I_i(\mathbf{x}) = J_i(\mathbf{x})$ whenever $\pi_i(\mathbf{x}) = \theta_i$, allowing us to replace $J_i$ by $I_i$ within the conditional expectation in \cref{eq:exact-derivative-j}.

\section{Using STEs to produce a kernel density estimator}
\label{app:ste-gives-kde}

Using the chain rule, we can differentiate the JumpReLU loss function to obtain the expression
\begin{multline}
    \frac{\partial \mathcal{L}_\thetabf(\mathbf{x})}{\partial \theta_i} =
    - \left(\frac{I_i(\mathbf{x})}{\theta_i}\right) \frac{\partial }{\partial \theta_i}\jr_{\theta_i}(\pi_i(\mathbf{x})) \\
    + \lambda \frac{\partial}{\partial \theta_i}H(\pi_i(\mathbf{x}) - \theta_i)
    \label{eq:chain-rule-on-loss}
\end{multline}
where $I_i(\mathbf{x})$ is defined as in \cref{eq:i-definition}. If we replace the partial derivatives in \cref{eq:chain-rule-on-loss} with the pseudo-derivatives defined in \cref{eq:jr-ste} and \cref{eq:step-ste}, we obtain the following expression for the pseudo-gradient of the loss:
\begin{equation}
    \frac{\pseudopartial \mathcal{L}_\thetabf(\mathbf{x})}{\pseudopartial \theta_i} = \frac{I_i(\mathbf{x}) - \lambda}{\varepsilon}K\left(\frac{\pi_i(\mathbf{x}) - \theta_i}{\varepsilon}\right).
\end{equation}
Computing this pseudo-gradient over a batch of observations $\mathbf{x}_1$, $\mathbf{x}_2$, $\ldots$, $\mathbf{x}_N$ and taking the mean, we obtain the kernel density estimator
\begin{equation*}
    \frac{1}{N\varepsilon}\sum_{\alpha=1}^{N} \left(I_i(\mathbf{x_\alpha}) - \lambda\right) K \left(\frac{
    \pi_i(\mathbf{x}_\alpha)-\theta_i}{\varepsilon}\right).
    \tag{\ref{eq:kde-estimate}}
\end{equation*}

\section{Combining Gated SAEs with the RI-L1 sparsity penalty}
\label{app:gated-sfn}
Gated SAEs compute two encoder pre-activations:
\begin{align}
    \pigate(\mathbf{x}) &:= \Wgate \mathbf{x} + \bgate, \\
    \pimag(\mathbf{x}) &:= \Wmag \mathbf{x} + \bmag.
\end{align}
The first of these is used to determine which features are active, via a Heaviside step activation function, whereas the second is used to determine active features' magnitudes, via a ReLU step function:
\begin{align}
    \fgate(\mathbf{x}) &:= H(\pigate(\mathbf{x})) \\
    \fmag(\mathbf{x}) &:= \text{ReLU}(\pimag(\mathbf{x})).
\end{align}
The encoder's overall output is given by the elementwise product $\mathbf{f}(\mathbf{x}) := \fgate(\mathbf{x}) \odot \fmag(\mathbf{x})$. The decoder of a Gated SAE takes the standard form
\begin{equation}
    \hat{\mathbf{x}}(\mathbf{f}) := \Wdec \mathbf{f} + \bdec.
    \tag{\ref{eq:decoder}}
\end{equation}
As in \citet{rajamanoharan2024improving}, we tie the weights of the two encoder matrices, parameterising $\Wmag$ in terms of $\Wgate$ and a vector-valued rescaling parameter $\rmag$:
\begin{equation}
    \left(\Wmag\right)_{ij} := \left(\exp(\rmag)\right)_i \left(\Wgate\right)_{ij}.
\end{equation}

The loss function used to train Gated SAEs in \citet{rajamanoharan2024improving} includes a L1 sparsity penalty and auxiliary loss term, both involving the positive elements of $\pigate$, as follows:
\begin{multline}
    \mathcal{L}_\text{gate} := \norm{\mathbf{x} - \hat{\mathbf{x}}(\mathbf{f}(\mathbf{x}))}_2^2
    + \lambda \norm{\text{ReLU}(\pigate(\mathbf{x}))}_1 \\ + \norm{\mathbf{x} - \hat{\mathbf{x}}_\text{frozen}(\text{ReLU}(\pigate(\mathbf{x})))}_2^2
    \label{eq:gated-loss}
\end{multline}
where $\hat{\mathbf{x}}_\text{frozen}$ is a frozen copy of the decoder, so that $\Wdec$ and $\bdec$ do not receive gradient updates from the auxiliary loss term.

For our JumpReLU evaluations in \cref{sec:experiments}, we also trained a variant of Gated SAEs where we replace the L1 sparsity penalty in \cref{eq:gated-loss} with the reparameterisation-invariant L1 (RI-L1) sparsity penalty $S_\rilo$ defined in \cref{eq:rilo-penalty}, i.e.~by making the replacement $\norm{\text{ReLU}(\pigate(\mathbf{x})}_1 \to S_\rilo(\pigate(\mathbf{x}))$, as well as unfreezing the decoder in the auxiliary loss term. As demonstrated in \cref{fig:paretos}, Gated SAEs trained this way have a similar sparsity-vs-fidelity trade-off to SAEs trained using the original Gated loss function, without the need to use resampling to avoid the appearance of dead features during training.

\section{Approximating TopK}
\label{app:top-k}

We used the approximate TopK approximation \texttt{jax.lax.approx\_max\_k} \citep{chern2022tpuknnknearestneighbor} to train the TopK SAEs used in the evaluations in \cref{sec:experiments}. Furthermore, we included the AuxK auxiliary loss function to train these SAEs. Supporting these decisions, \cref{fig:topk-variants} shows:
\begin{compactitem}
\item That SAEs trained with an approximate TopK activation function perform similarly to those trained with an exact TopK activation function;
\item That the AuxK loss slightly improves reconstruction fidelity at a given level of sparsity.
\end{compactitem}

\begin{figure}[t]
\centering
\includegraphics[width=\columnwidth]{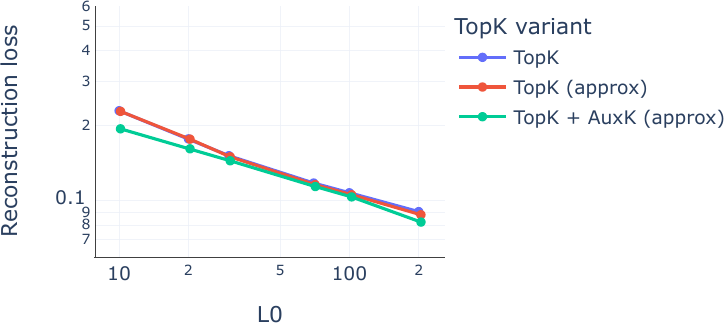}
\caption{Using an approximation of TopK leads to similar performance as exact TopK. Adding the AuxK term to the loss function slightly improves fidelity at a given level of sparsity.}
\label{fig:topk-variants}
\end{figure}

\section{Training JumpReLU SAEs to match a desired level of sparsity}
\label{app:l0-target}
Using the same pseudo-derivatives defined in \cref{sec:jumprelu-saes} it is possible to train JumpReLU SAEs with other loss functions. For example, it may be desirable to be able to target a specific level of sparsity during training -- as is possible by setting $K$ when training TopK SAEs -- instead of the sparsity of the trained SAE being an implicit function of the sparsity coefficient and reconstruction loss.

A simple way to achieve this is by training JumpReLU SAEs with the loss
\begin{equation}
    \mathcal{L}(\mathbf{x}) := \norm{\mathbf{x}-\hat{\mathbf{x}}(\mathbf{f}(\mathbf{x}))}_2^2 + \lambda \left(\frac{\norm{\mathbf{f}(\mathbf{x})}_0}{L_0^\text{target}} - 1\right)^2.
    \label{eq:target-l0-loss}
\end{equation}
Training SAEs with this loss on Gemma 2 9B's residual stream after layer 20, we find a similar fidelity-to-sparsity relationship to JumpReLU SAEs trained with the loss in \cref{eq:jr-loss}, as shown in \cref{fig:paretos-target}. Moreover, by using with the above loss, we are able to train SAEs that have L0s at convergence that are close to their targets, as shown by the proximity of the red dots in the figure to their respective vertical grey lines.

\begin{figure}[t]
\centering
\includegraphics[width=\columnwidth]{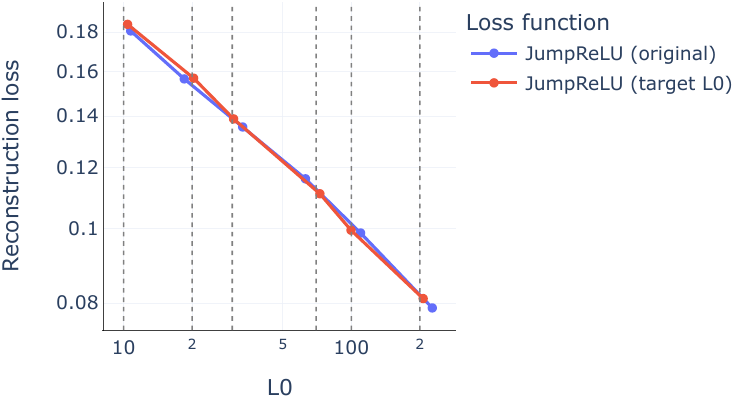}
\caption{By using the sparsity penalty in \cref{eq:target-l0-loss}, we can train JumpReLU SAEs to minimize reconstruction loss while maintaining a desired target level of sparsity. The vertical dashed grey lines indicate the target L0 values used to train the SAEs represented by the red dots closest to each line. These SAEs were trained setting $\lambda=1$.}
\label{fig:paretos-target}
\end{figure}

\section{Additional benchmarking results}
\label{app:more-benchmarking-results}

\cref{fig:paretos-mlp} and \cref{fig:paretos-attn} plot reconstruction fidelity against sparsity for SAEs trained on Gemma 2 9B MLP and attention outputs at layers 9, 20 and 31. \cref{fig:paretos-fvu} uses fraction of variance explained (see \cref{sec:experiments}) as an alternative measure of reconstruction fidelity, and again compares the fidelity-vs-sparsity trade-off for JumpReLU, Gated and TopK SAEs on MLP, attention and residual stream layer outputs for Gemma 2 9B layers 9, 20 and 31. \cref{fig:feature_frequency_histograms} compares feature activation frequency histograms for JumpReLU, TopK and Gated SAEs of comparable sparsity.

\begin{figure*}[p]
\centering
\includegraphics[width=\textwidth]{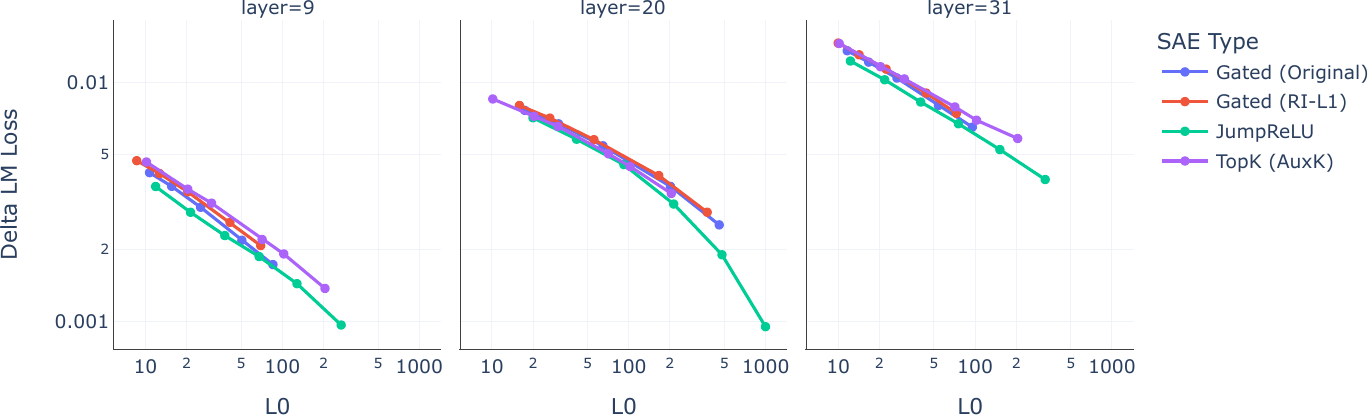}
\caption{Comparing reconstruction fidelity versus sparsity for JumpReLU, Gated and TopK SAEs trained on Gemma 2 9B layer 9, 20 and 31 MLP outputs. JumpReLU SAEs consistently provide more faithful reconstructions (lower delta LM loss) at a given level of sparsity (as measured by L0).}
\label{fig:paretos-mlp}
\end{figure*}

\begin{figure*}[p]
\centering
\includegraphics[width=\textwidth]{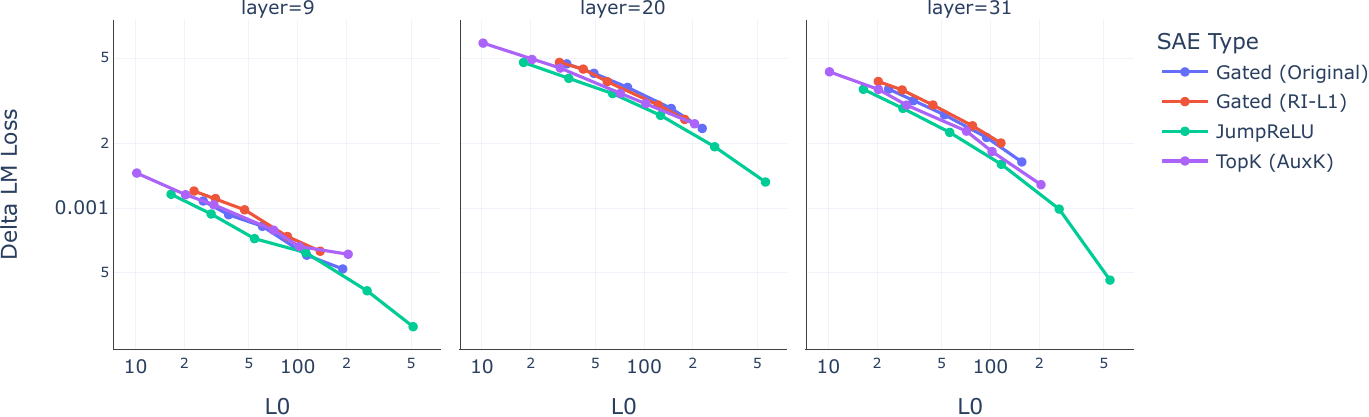}
\caption{Comparing reconstruction fidelity versus sparsity for JumpReLU, Gated and TopK SAEs trained on Gemma 2 9B layer 9, 20 and 31 attention activations prior to the attention output linearity ($\mathbf{W}_O$). JumpReLU SAEs consistently provide more faithful reconstructions (lower delta LM loss) at a given level of sparsity (as measured by L0).}
\label{fig:paretos-attn}
\end{figure*}

\begin{figure*}[p]
\centering
\includegraphics[width=\textwidth]{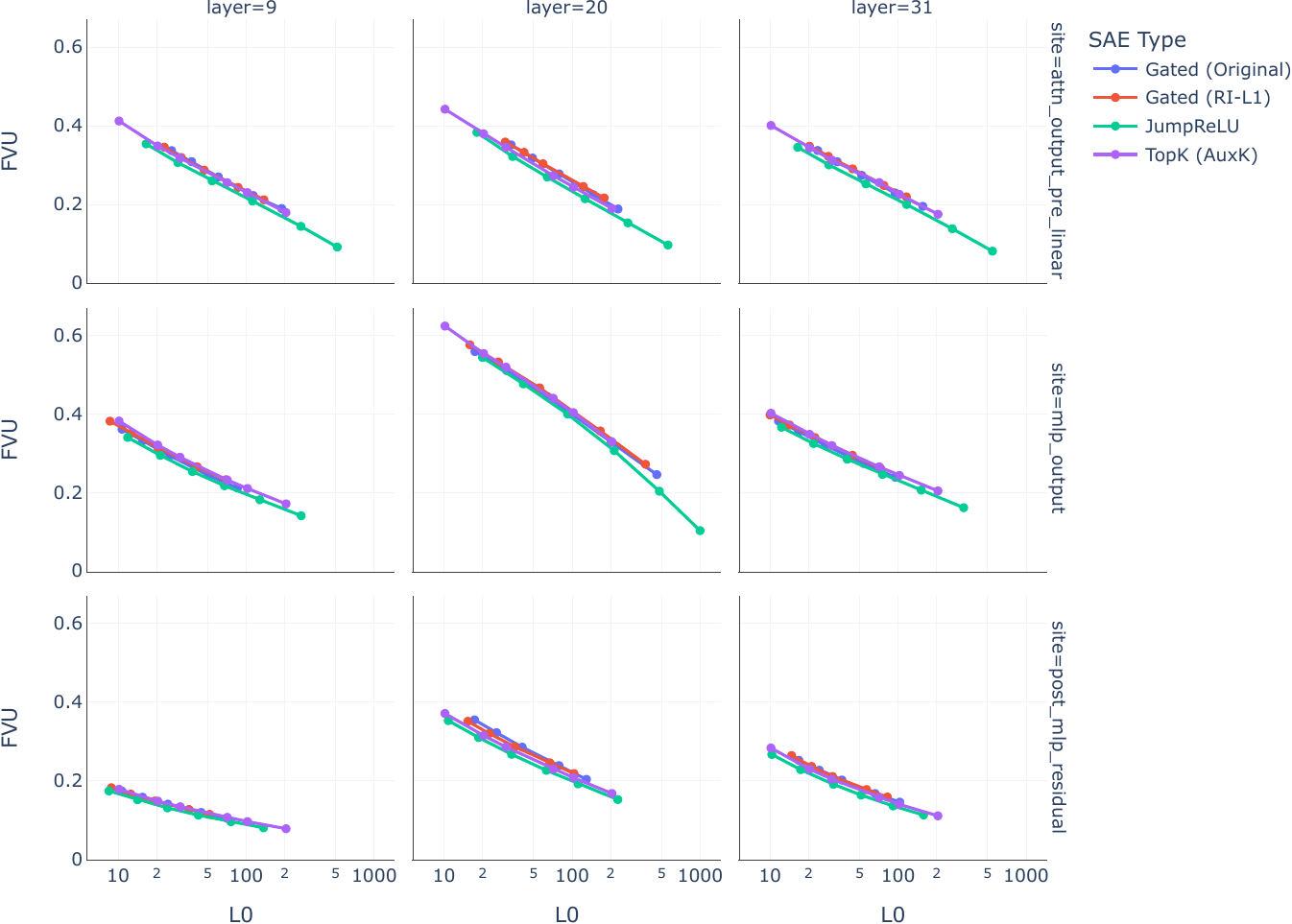}
\caption{Comparing reconstruction fidelity versus sparsity for JumpReLU, Gated and TopK SAEs trained on Gemma 2 9B layer 9, 20 and 31 MLP, attention and residual stream activations using fraction of variance unexplained (FVU) as a measure of reconstruction fidelity.}
\label{fig:paretos-fvu}
\end{figure*}

\begin{figure*}[p]
\centering
\includegraphics[width=\textwidth]{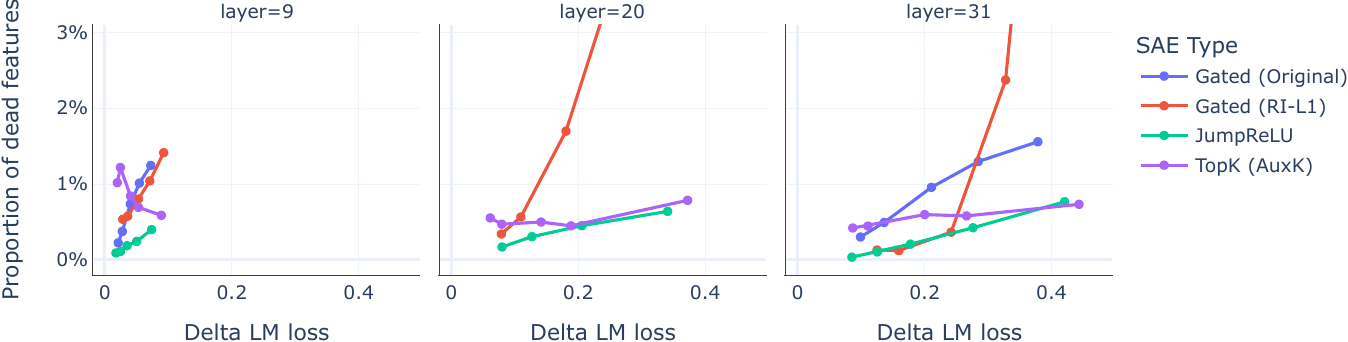}
\caption{JumpReLU and TopK SAEs have few dead features (features that activate on fewer than one in $10^7$ tokens), even without resampling. Note that the original Gated loss (blue) -- the only training method that uses resampling -- had around 40\% dead features at layer 20 and is therefore missing from the middle plot.}
\label{fig:dead-features}
\end{figure*}

\begin{figure*}[p]
\centering
\includegraphics[width=\textwidth]{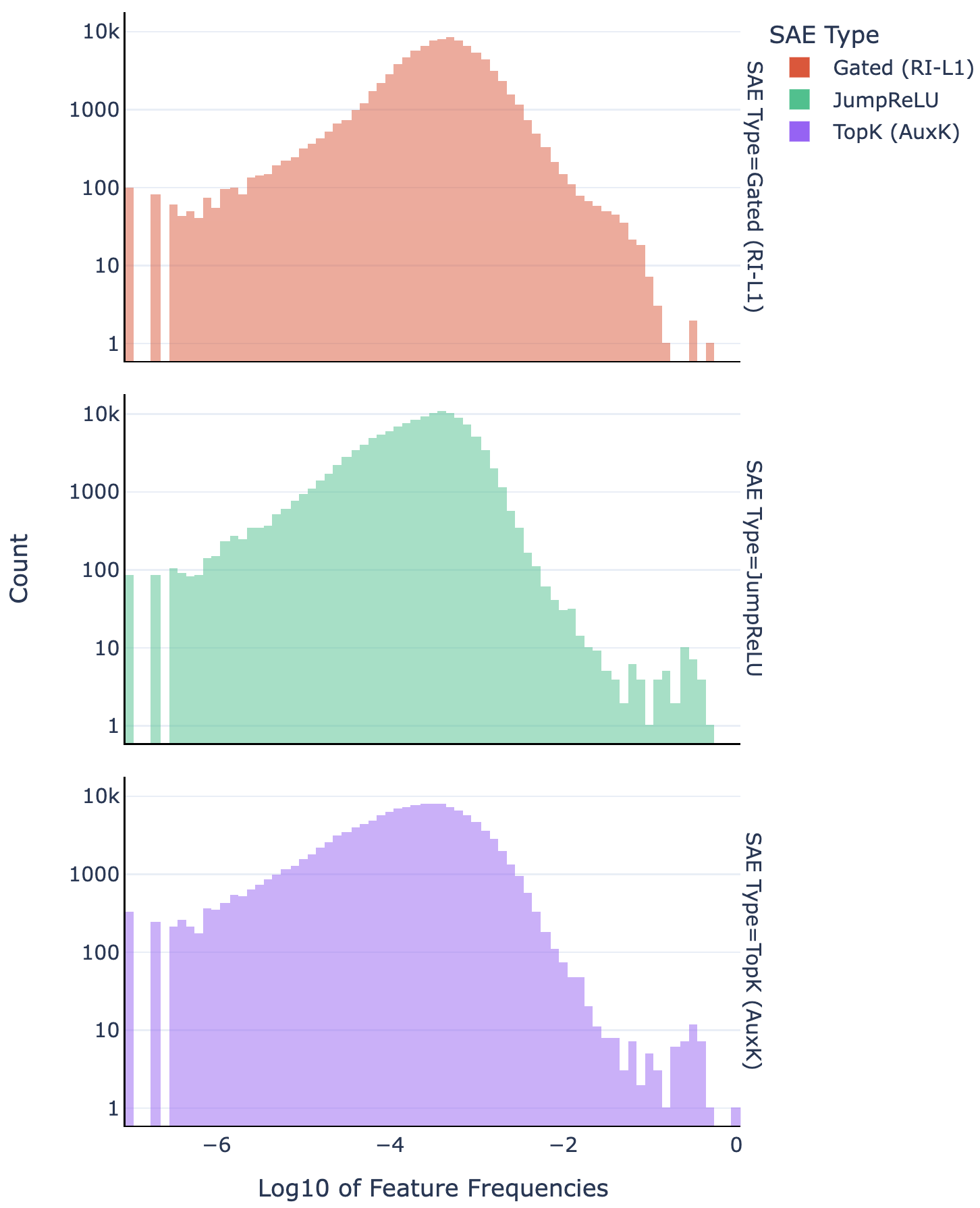}
\caption{Feature frequency histograms for JumpReLU, TopK and Gated SAEs all with L0 approximately 70 (excluding features with zero activation counts). Note the log-scale on the y-axis: this is to highlight a small mode of high frequency features present in the JumpReLU and TopK SAEs. Gated SAEs do not have this mode, but do have a ``shoulder'' of features with frequencies between $10^{-2}$ and $10^{-1}$ not present in the JumpReLU and TopK SAEs.}
\label{fig:feature_frequency_histograms}
\end{figure*}

\paragraph{Automated interpretability}
In fig \cref{fig:auto_interp_app} we show the distribution and means of the correlations between LM-simulated and ground truth activations, broken down by layer and site. In line with our other findings, layer 20 and the pre-linear attention output seem to perform worst on this metric.

\begin{figure*}[p]
\centering
\includegraphics[width=0.9\textwidth]{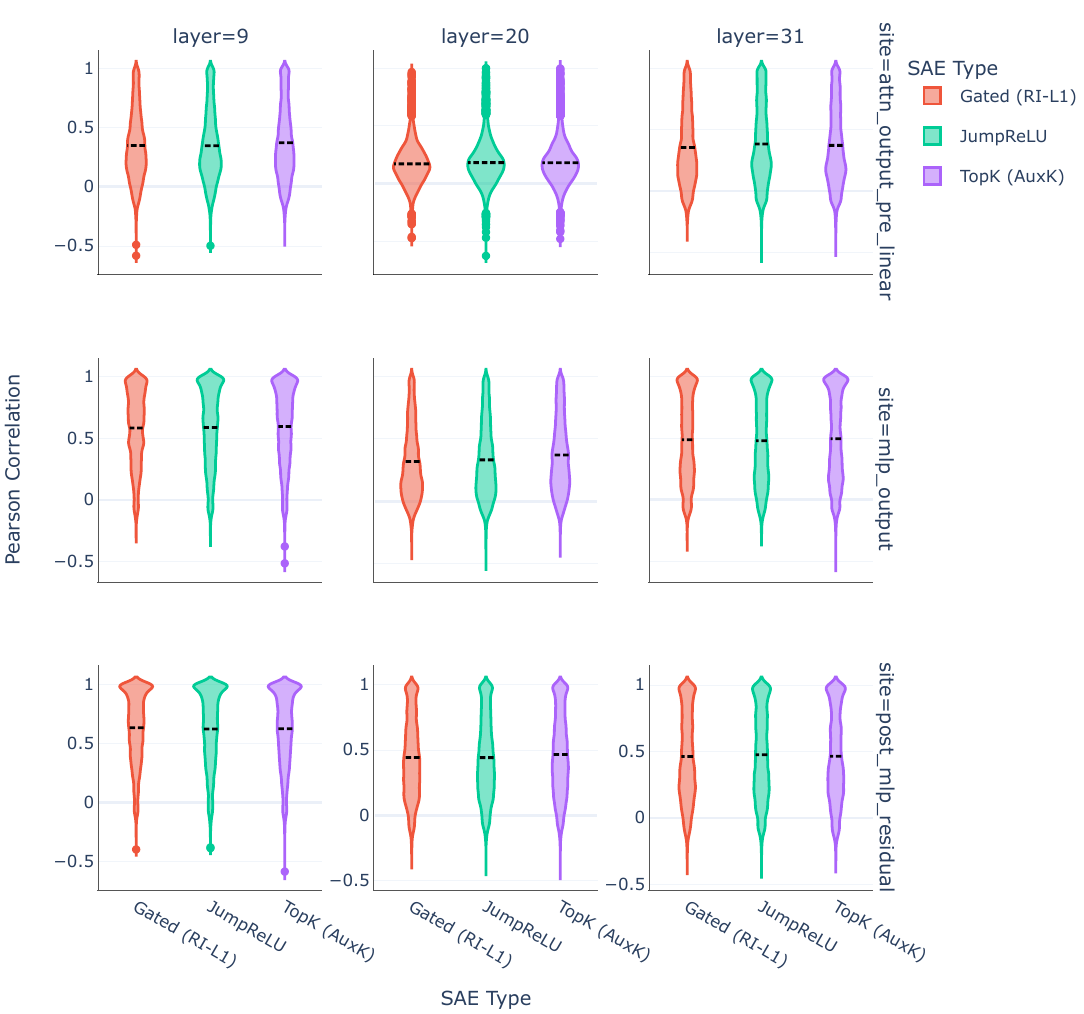}
\caption{Pearson correlation between simulated and ground truth activations, broken down by site and layer.}
\label{fig:auto_interp_app}
\end{figure*}

\begin{figure*}[p]
\centering
\includegraphics[width=\textwidth]{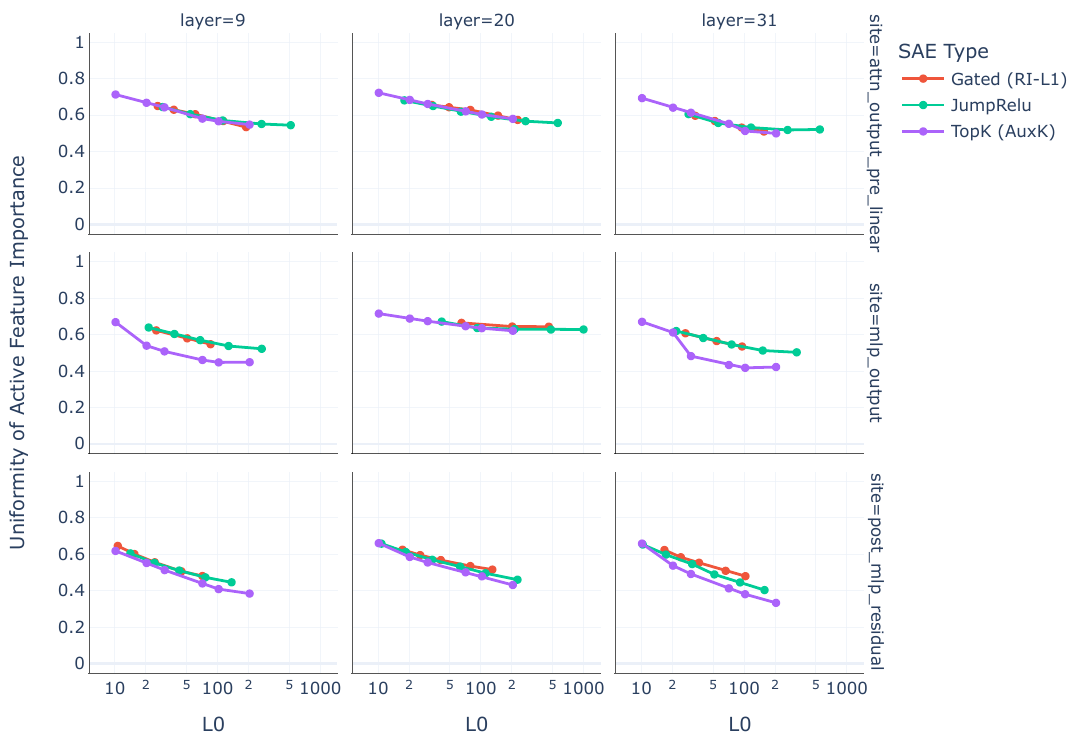}
\caption{Comparing uniformity of active feature importance against L0 for JumpReLU, Gated and TopK SAEs. All SAEs diffuse their effects more with increased L0. This effect appears strongest for TopK SAEs.}
\label{fig:effective-l0-ratio}
\end{figure*}

\paragraph{Attribution Weighted Effective Sparsity}
Conventionally, sparsity of SAE feature activations is measured as the L0 norm of the feature activations. \Citet{olah2024open} suggest to train SAEs to have low L1 activation of attribution-weighted feature activations, taking into account that some features may be more important than others. Inspired by this, we investigate the sparsity of the attribution weighted feature activations. Following \citet{olah2024open}, we define the attribution-weighted feature activation vector $\mathbf{y}$ as 

\begin{align*}
    \mathbf{y} := \mathbf{f}(\mathbf{x}) \odot \Wdec^T \nabla_\mathbf{x} \mathcal{L},
\end{align*}

where we choose the mean-centered logit of the correct next token as the loss function $\mathcal{L}$.
We then normalize the magnitudes of the entries of $\mathbf{y}$ to obtain a probability distribution $p \equiv p(\mathbf{y})$. We can measure how far this distribution diverges from a uniform distribution $u$ over active features via the KL divergence
\begin{align*}
    \mathbf{D}_{\text{KL}}(p \| u) = \log \|\mathbf{y}\|_0 - \mathbf{S}(p),
\end{align*}
with the entropy $\mathbf{S}(p)$. Note that $0 \leq \mathbf{D}_{\text{KL}}(p \| u)\leq \log \|\mathbf{y}\|_0$. Exponentiating the negative KL divergence gives a new measure $r_{L0}$ 
\begin{align*}
    r_{L0} := e^{-\mathbf{D}_{\text{KL}}(p \| u)} = \frac{e^{\mathbf{S}(p)}}{\|\mathbf{y}\|_0},
\end{align*}
with $\frac{1}{\|\mathbf{y}\|_0} \leq r_{L0} \leq 1$. Note that since $e^\mathbf{S}$ can be interpreted as the effective number of active elements, $r_{L0}$ is the ratio of the effective number of active features (after reweighting) to the total number of active features, which we call the `Uniformity of Active Feature Importance'.
We computed $r_{L0}$ over 2048 sequences of length 1024 (ignoring special tokens) for all SAE types and sparsity levels and report the result in \cref{fig:effective-l0-ratio}.
For all SAE types and locations, the more features are active the more diffuse their effect appears to be. Furthermore, this effect seems to be strongest for TopK SAEs, while Gated and JumpReLU SAEs behave mostly identical (except for layer 31, residual stream SAEs). However, we caution to not draw premature conclusions about feature quality from this observation.

\section{Using other kernel functions}
\label{app:other-kernels}

As described in \cref{sec:jumprelu-saes}, we used a simple rectangle function as the kernel, $K(z)$, within the pseudo-derivatives defined in \cref{eq:jr-ste} and \cref{eq:step-ste}. As shown in \cref{fig:other-kernels}, similar results can be obtained with other common KDE kernel functions; there does not seem to be any obvious benefit to using a higher order kernel.

\begin{figure}[t]
\centering
\includegraphics[width=\columnwidth]{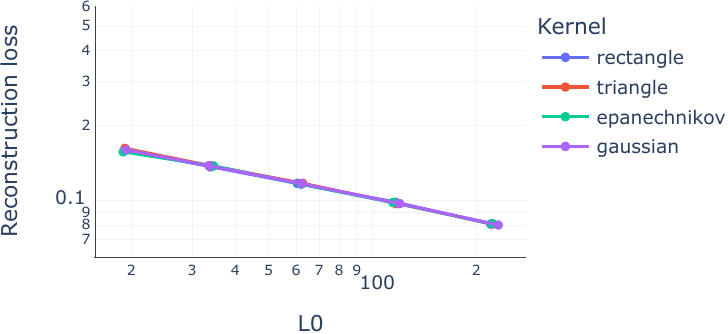}
\caption{Using different kernel functions to compute the pseudo-derivatives defined in \cref{eq:jr-ste} and \cref{eq:step-ste} has little impact on fidelity-vs-sparsity curves. These curves are for Gemma 2 9B post-layer 20 residual stream SAEs trained on 2B tokens.}
\label{fig:other-kernels}
\end{figure}

\section{Further details on our training methodology}
\label{app:further-training-details}

\begin{compactitem}
\item We normalise LM activations so that they have mean squared L2 norm of one during SAE training. This helps to transfer hyperparameters between different models, sites and layers.
\item We trained all our SAEs with a learning rate of $7\times10^{-5}$ and batch size of 4,096.
\item As in \citet{rajamanoharan2024improving}, we warm up the learning rate over the first 1,000 steps (4M tokens) using a cosine schedule, starting the learning rate at 10\% of its final value (i.e.~starting at $7\times10^{-6}$).
\item We used the Adam optimizer \citep{kingma2017adammethodstochasticoptimization} $\beta_1 = 0$, $\beta_2=0.999$ and $\epsilon=10^{-8}$. In our initial hyperparameter study, we found training with lower momentum ($\beta_1 < 0.9$) produced slightly better fidelity-vs-sparsity carves for JumpReLU SAEs, although differences were slight.
\item We use a pre-encoder bias during training \cite{bricken2023monosemanticity} -- i.e.~subtract $\bdec$ from $\mathbf{x}$ prior to the encoder. Through ablations we found this to either have no impact or provide a small improvement to performance (depending on model, site and layer).
\item For JumpReLU SAEs we initialised the threshold $\thetabf$ to 0.001 and the bandwidth $\varepsilon$ also to 0.001. These parameters seem to work well for a variety of LM sizes, from single layer models up to and including Gemma 2 9B.
\item For Gated RI-L1 SAEs we initialised the norms of the decoder columns $\norm{\mathbf{d}_i}_2$ to 0.1.
\item We trained all SAEs except for Gated RI-L1 while constraining the decoder columns $\norm{\mathbf{d}_i}_2$ to 1.\footnote{This is not strictly necessary for JumpReLU SAEs and we subsequently found that training JumpReLU SAE without this constraint does not change fidelity-vs-sparsity curves, but we have not fully explored the consequences of turning this constraint off.}
\item Following \citet{conerly2024trainingsaes} we set $\Wenc$ to be the transpose of $\Wdec$ at initialisation (but thereafter left the two matrices untied) when training of all SAE types, and warmed up $\lambda$ linearly over the first 10,000 steps (40M tokens) for all except TopK SAEs.
\item We used resampling \citep{bricken2023monosemanticity} -- periodically re-initialising the parameters corresponding to dead features -- with Gated (original loss) SAEs, but did not use resampling with Gated RI-L1, TopK or JumpReLU SAEs.
\end{compactitem}

\section{Pseudo-code for implementing and training JumpReLU SAEs}

We include pseudo-code for implementing:
\begin{compactitem}
\item The Heaviside step function with custom backward pass defined in \cref{eq:step-ste}.
\item The JumpReLU activation function with custom backward pass defined in \cref{eq:jr-ste}.
\item The JumpReLU SAE forward pass.
\item The JumpReLU loss function.
\end{compactitem}

Our pseudo-code most closely resembles how these functions can be implemented in JAX, but should be portable to other frameworks, like PyTorch, with minimal changes.

Two implementation details to note are:
\begin{compactitem}
\item We use the logarithm of threshold, i.e.~$\log(\thetabf)$, as our trainable parameter, to ensure that the threshold remains positive during training.
\item Even with this parameterisation, it is possible for the threshold to become smaller than half the bandwidth, i.e.~that $\theta_i < \varepsilon/2$ for some $i$. To ensure that negative pre-activations can never influence the gradient computation, we take the ReLU of the pre-activations before passing these to the JumpReLU activation function or the Heaviside step function used to compute the L0 sparsity term. Mathematically, this has no impact on the forward pass (because pre-activations below the positive threshold are set to zero in both cases anyway), but it ensures that negative pre-activations cannot bias gradient estimates in the backward pass. 
\end{compactitem}

\onecolumn

\begin{lstlisting}[language=Python]
def rectangle(x):
  return ((x > -0.5) & (x < 0.5)).astype(x.dtype)


### Implementation of step function with custom backward

@custom_vjp
def step(x, threshold):
  return (x > threshold).astype(x.dtype)


def step_fwd(x, threshold):
  out = step(x, threshold)
  cache = x, threshold  # Saved for use in the backward pass
  return out, cache


def step_bwd(cache, output_grad):
  x, threshold = cache
  x_grad = 0.0 * output_grad  # We don't apply STE to x input
  threshold_grad = (
      -(1.0 / bandwidth) * rectangle((x - threshold) / bandwidth) * output_grad
  )
  return x_grad, threshold_grad


step.defvjp(step_fwd, step_bwd)


### Implementation of JumpReLU with custom backward for threshold

@custom_vjp
def jumprelu(x, threshold):
  return x * (x > threshold)


def jumprelu_fwd(x, threshold):
  out = jumprelu(x, threshold)
  cache = x, threshold  # Saved for use in the backward pass
  return out, cache


def jumprelu_bwd(cache, output_grad):
  x, threshold = cache
  x_grad = (x > threshold) * output_grad  # We don't apply STE to x input
  threshold_grad = (
      -(threshold / bandwidth)
      * rectangle((x - threshold) / bandwidth)
      * output_grad
  )
  return x_grad, threshold_grad


jumprelu.defvjp(jumprelu_fwd, jumprelu_bwd)


### Implementation of JumpReLU SAE forward pass and loss functions

def sae(params, x, use_pre_enc_bias):
  # Optionally, apply pre-encoder bias
  if use_pre_enc_bias:
    x = x - params.b_dec

  # Encoder - see accompanying text for why we take the ReLU
  # of pre_activations even though it isn't mathematically
  # necessary
  pre_activations = relu(x @ params.W_enc + params.b_enc)
  threshold = exp(params.log_threshold)
  feature_magnitudes = jumprelu(pre_activations, threshold)

  # Decoder
  x_reconstructed = feature_magnitudes @ params.W_dec + params.b_dec

  # Also return pre_activations, needed to compute sparsity loss
  return x_reconstructed, feature_magnitudes


### Implementation of JumpReLU loss

def loss(params, x, sparsity_coefficient, use_pre_enc_bias):
  x_reconstructed, feature_magnitudes = sae(params, x, use_pre_enc_bias)

  # Compute per-example reconstruction loss
  reconstruction_error = x - x_reconstructed
  reconstruction_loss = sum(reconstruction_error**2, axis=-1)

  # Compute per-example sparsity loss
  threshold = exp(params.log_threshold)
  l0 = sum(step(feature_magnitudes, threshold), axis=-1)
  sparsity_loss = sparsity_coefficient * l0

  # Return the batch-wise mean total loss
  return mean(reconstruction_loss + sparsity_loss, axis=0)
\end{lstlisting}

\label{app:pseudo-code}

\end{document}